\documentclass[sigconf]{acmart}
\AtBeginDocument{%
  \providecommand\BibTeX{{%
    \normalfont B\kern-0.5em{\scshape i\kern-0.25em b}\kern-0.8em\TeX}}}

\setcopyright{acmlicensed}
\copyrightyear{2018}
\acmYear{2018}
\acmDOI{XXXXXXX.XXXXXXX}

\acmConference[Conference acronym 'XX]{Make sure to enter the correct
  conference title from your rights confirmation emai}{June 03--05,
  2018}{Woodstock, NY}
\acmISBN{978-1-4503-XXXX-X/18/06}

\usepackage{booktabs}
\usepackage{algorithm}
\usepackage{multirow}
\usepackage{pifont}
\usepackage{xcolor}
\usepackage{adjustbox}
\usepackage{makecell}
\usepackage{enumitem}
\usepackage{algpseudocode}
\usepackage{mathtools}
\usepackage{tcolorbox}
\usepackage{colortbl}
\usepackage{subcaption}

\theoremstyle{definition}
\newtheorem{definition}{Definition}
\newtheorem{theorem}{Theorem}

\newtheorem{problem}{Problem}

\usepackage{pifont}

\usepackage[framemethod=tikz]{mdframed}
\newcounter{finding}
\numberwithin{finding}{section}

\title{Efficient Cross-Architecture Knowledge Transfer for Large-Scale Online User Response Prediction}

\author{Yucheng Wu}
\authornote{This work was done when the author was an intern at Tencent.}
\authornote{Both authors contributed equally to this research.}
\affiliation{
  \institution{Key Lab of High Confidence Software Technologies (Peking University), Ministry of Education \& School of Computer Science, Peking University}
  \city{Beijing}
  \country{China}
}
\email{wuyucheng@stu.pku.edu.cn}

\author{Yuekui Yang}
\authornotemark[2]
\authornote{Corresponding authors.}
\affiliation{
  \institution{Department of Computer Science and Technology, Tsinghua University \& Advertising Engineering Department, CDG, Tencent Corporation}
  \city{Beijing}
  \country{China}
}
\email{yuekuiyang@tencent.com}

\author{Hongzheng Li}
\affiliation{
  \institution{Advertising Engineering Department, CDG, Tencent Corporation}
  \city{Beijing}
  \country{China}
}
\email{ethanleeli@tencent.com}

\author{Anan Liu}
\affiliation{
  \institution{Advertising Engineering Department, CDG, Tencent Corporation}
  \city{Beijing}
  \country{China}
}
\email{ananliu@tencent.com}

\author{Jian Xiao}
\affiliation{
  \institution{Advertising Engineering Department, CDG, Tencent Corporation}
  \city{Beijing}
  \country{China}
}
\email{jakexiao@tencent.com}

\author{Junjie Zhai}
\affiliation{
  \institution{Advertising Engineering Department, CDG, Tencent Corporation}
  \city{Beijing}
  \country{China}
}
\email{jasonzhai@tencent.com}

\author{Huan Yu}
\affiliation{
  \institution{Advertising Engineering Department, CDG, Tencent Corporation}
  \city{Beijing}
  \country{China}
}
\email{huanyu@tencent.com}

\author{Shaoping Ma}
\affiliation{
  \institution{Department of Computer Science and Technology, Tsinghua University}
  \city{Beijing}
  \country{China}
}
\email{msp@tsinghua.edu.cn}

\author{Leye Wang}
\authornotemark[3]
\affiliation{
  \institution{Key Lab of High Confidence Software Technologies (Peking University), Ministry of Education \& School of Computer Science, Peking University}
  \city{Beijing}
  \country{China}
}
\email{leyewang@pku.edu.cn}

\begin{abstract}
Deploying new architectures in large-scale user response prediction systems incurs high model switching costs due to expensive retraining on massive historical data and performance degradation under data retention constraints. Existing knowledge distillation methods struggle with architectural heterogeneity and the prohibitive cost of transferring large embedding tables.
We propose CrossAdapt, a two-stage framework for efficient cross-architecture knowledge transfer. The offline stage enables rapid embedding transfer via dimension-adaptive projections without iterative training, combined with progressive network distillation and strategic sampling to reduce computational cost. The online stage introduces asymmetric co-distillation, where students update frequently while teachers update infrequently, together with a distribution-aware adaptation mechanism that dynamically balances historical knowledge preservation and fast adaptation to evolving data.
Experiments on three public datasets show that CrossAdapt achieves 0.27–0.43\% AUC improvements while reducing training time by 43–71\%. Large-scale deployment on Tencent WeChat Channels ($\sim$10M daily samples) further demonstrates its effectiveness, significantly mitigating AUC degradation, LogLoss increase, and prediction bias compared to standard distillation baselines.
\end{abstract}

\ccsdesc[500]{Information systems~Recommender systems}
\ccsdesc[500]{Information systems~Online advertising}

\keywords{Recommender Systems, User Response Prediction, Online Learning, Knowledge Distillation, Distribution Shift, Model Efficiency}

\begin{document}
\maketitle

\section{Introduction}

% Click-through rate (CTR) prediction is a cornerstone of modern recommender systems and online advertising platforms, where even slight improvements in accuracy can yield significant revenue gains \cite{zhou2018deep, zhang2021deep}. In large-scale industrial applications, CTR models operate within highly dynamic environments driven by continuous data streams and evolving user behaviors. To maintain responsiveness, they often rely on \textit{online learning} algorithms that incrementally update model parameters based on real-time feedback, enabling rapid adaptation to shifting trends and patterns \cite{mcmahan2013ad, he2014practical}. The rapid evolution of deep learning has catalyzed frequent architectural innovations in CTR prediction models, progressing from simple logistic regression to complex deep architectures such as Wide\&Deep \cite{cheng2016wide}, DeepFM \cite{guo2017deepfm}, and transformer-based models \cite{song2019autoint}. 

User response prediction, including click-through rate (CTR) and conversion rate (CVR) prediction tasks, is a cornerstone of modern recommender systems and online advertising platforms, where even slight improvements in accuracy can yield significant revenue gains \cite{zhou2018deep, zhang2021deep}. In large-scale industrial applications, user response prediction models operate within highly dynamic environments driven by continuous data streams and evolving user behaviors. To maintain responsiveness, they often rely on online learning algorithms that incrementally update model parameters based on real-time feedback, enabling rapid adaptation to shifting trends and patterns \cite{mcmahan2013ad, he2014practical}. The rapid evolution of deep learning has catalyzed frequent architectural innovations in user response prediction models, progressing from simple logistic regression to complex deep architectures such as Wide\&Deep \cite{cheng2016wide}, DeepFM \cite{guo2017deepfm}, and transformer-based models \cite{song2019autoint}. These architectures serve as foundations for pCTR/pCVR models (CTR/CVR predicted models), which estimate the probability of user clicks/conversions.

% While these architectural advances promise superior predictive performance and enhanced feature interaction modeling capabilities, deploying new model architectures in production systems presents significant challenges that we term as \textbf{model switching cost}. 
% The model switching cost manifests in two critical dimensions. First, \textit{computational overhead}: retraining a new architecture from scratch on historical data spanning potentially trillions of samples incurs prohibitive time and computational costs. In industrial settings where models are trained on months or years of accumulated user interaction data, complete retraining can require weeks of computation on distributed clusters, delaying the deployment of improved architectures \cite{wang2020practical}. Second, \textit{performance degradation}: due to data retention policies and storage limitations, industrial systems often cannot preserve complete historical training data. Consequently, new models trained only on available recent data suffer from insufficient exposure to long-tail patterns and rare events, resulting in suboptimal performance compared to existing models that have accumulated knowledge over extended periods \cite{zhang2020retrain}.

While these architectural advances promise superior predictive performance, deploying new model architectures in production systems presents significant challenges that we term \textbf{model switching cost}. This cost manifests in two critical dimensions. First, \textit{computational overhead}: retraining from scratch on potentially trillions of historical samples incurs prohibitive time and computational costs. In industrial settings with months or years of accumulated data, complete retraining can require weeks of distributed computation, delaying improved architecture deployment \cite{wang2020practical}. Second, \textit{performance degradation}: due to data retention policies and storage limitations, industrial systems often cannot preserve complete historical data. New models trained only on recent data suffer from insufficient exposure to long-tail patterns and rare events, resulting in suboptimal performance compared to existing models that have accumulated knowledge over extended periods \cite{zhang2020retrain}.

Knowledge distillation \cite{hinton2015distilling} presents a promising approach to mitigate model switching costs by transferring knowledge from existing models (teacher) to new architectures (student). However, applying traditional distillation to pCTR/pCVR models poses unique difficulties. First, architectural heterogeneity between teacher and student models renders intermediate layer distillation infeasible due to dimensional and semantic misalignment \cite{ba2014deep}, limiting knowledge transfer to weaker output-level signals, which provide weaker supervision compared to layer-wise alignment. Second, embedding tables encoding categorical features constitute the majority of pCTR/pCVR model parameters (often exceeding 99\%) \cite{wang2017deep,lai2023adaembed,liu2021learnable}, yet their massive cardinality makes iterative training computationally prohibitive. Third, online data streams exhibit continuous distribution shifts as user preferences evolve \cite{mcmahan2013ad}, raising questions about when historical knowledge remains beneficial versus when rapid adaptation is critical. We formulate these into two challenges:
\begin{itemize}[leftmargin=*,noitemsep]
    \item \textbf{Challenge 1: Cross-Architecture Knowledge Inheritance.} How can we efficiently transfer knowledge from a well-trained teacher to a heterogeneous student with different embedding dimensions and feature interaction mechanisms, without retraining on full historical data?
    \item \textbf{Challenge 2: Distribution-Aware Online Adaptation.} How can we design an online learning mechanism that balances the transfer of invariant historical knowledge with rapid adaptation to newly emerging dynamic information?
    % How can we design an online learning mechanism that balances historical preservation with fast adaptation, enabling the student to evolve beyond its teacher?
\end{itemize}

% Addressing both challenges simultaneously is inherently difficult. Efficient knowledge transfer relies on stable teacher guidance and historical coverage, whereas adaptive online learning demands agility and responsiveness to streaming data. A unified process would either overburden training with expensive teacher updates or constrain adaptation with outdated supervision. To resolve this tension, we separate model transfer into two coordinated phases. The \textit{offline phase} focuses on fast inheritance when deploying a new architecture: using historical data and a frozen teacher, the student inherits comprehensive knowledge efficiently. The subsequent \textit{online phase} emphasizes continual adaptation: as data distributions evolve and static supervision becomes outdated, the teacher–student pair co-evolves, enabling the student to continually refine its representation capacity.
% This phase separation is essential for maximizing transfer efficiency offline and achieving a stability–adaptability balance online. 
% Building on this insight, we propose \textbf{CrossAdapt}, a two-stage framework that integrates offline cross-architecture transfer with online adaptive co-distillation.

Addressing both challenges simultaneously is inherently difficult. Efficient knowledge transfer relies on stable teacher guidance, whereas adaptive online learning demands responsiveness to streaming data. A unified process would either overburden training with expensive teacher updates or constrain adaptation with outdated supervision. To resolve this tension, we separate model transfer into two coordinated phases. The \textit{offline phase} focuses on fast inheritance when deploying new architectures: using historical data and a frozen teacher, the student efficiently inherits comprehensive knowledge. The subsequent \textit{online phase} emphasizes continual adaptation: as distributions evolve and static supervision becomes outdated, the teacher--student pair co-evolves, enabling the student to continually refine its capacity. This phase separation is essential for maximizing transfer efficiency offline and achieving stability--adaptability balance online. 

% Building on this insight, we propose \textbf{CrossAdapt}, a two-stage framework integrating offline cross-architecture transfer with online adaptive co-distillation.
% In the \textit{offline stage} (addressing Challenge~1), knowledge is transferred through dimension-adaptive embedding projection and progressive network distillation. Embedding tables are transformed via direct mathematical mappings that optimally preserve feature relationships, while interaction networks are trained progressively–first with frozen embeddings to avoid gradient interference, then jointly optimized after stabilization. Strategic sampling reduces cost by oversampling informative positives and ensuring temporal coverage.
% In the \textit{online stage} (addressing Challenge~2), asymmetric teacher–student co-evolution enables continuous adaptation. The student updates rapidly to capture new trends, while the teacher evolves more slowly to maintain stable supervision. An adaptive mechanism monitors feature distribution shifts: during stable periods, historical samples enrich distillation; during shifts, training focuses exclusively on streaming data for rapid adaptation.

Building on this insight, we propose \textbf{CrossAdapt}, a two-stage framework integrating offline cross-architecture transfer with online adaptive co-distillation.
In the \textit{offline stage} (addressing Challenge~1), knowledge transfers through dimension-adaptive embedding projection and progressive network distillation. Embedding tables are transformed via mathematical mappings that preserve feature relationships, while interaction networks train progressively--first with frozen embeddings to avoid gradient interference, then jointly optimized. Strategic sampling reduces training cost while preserving performance by improving sample information density and diversity.
In the \textit{online stage} (addressing Challenge~2), asymmetric teacher--student co-evolution enables continuous adaptation. The student updates rapidly to capture new trends, while the teacher evolves slowly to maintain stable supervision. An adaptive mechanism monitors distribution shifts: during stability, historical samples enrich distillation; during shifts, training focuses on streaming data for rapid adaptation.

% Our key contributions are summarized as follows:
% \begin{enumerate}[leftmargin=*,noitemsep]
% \item We define the model switching cost in large-scale user response prediction systems, decomposing it into performance degradation and deployment overhead as a benchmark for evaluating model transitions.

% \item We propose CrossAdapt, a two-stage heterogeneous knowledge transfer framework that integrates offline cross-architecture transfer with online adaptive co-distillation, jointly optimizing computational efficiency and adaptability to distribution shifts.

% \item We design a dimension-adaptive projection strategy for embedding tables that enables rapid and near-lossless knowledge transfer with theoretical guarantees.

% \item Extensive experiments on public and industrial datasets show that CrossAdapt greatly accelerates model knowledge transfer and consistently surpasses standard distillation baselines.
% \end{enumerate}

Our key contributions are summarized as follows:
\begin{enumerate}[leftmargin=*,noitemsep]
\item We systematically formulate the model switching cost problem in user response prediction systems, decomposing it into performance degradation and computational overhead.
\item We propose CrossAdapt, a two-stage knowledge transfer framework that integrates offline cross-architecture transfer with online adaptive co-distillation, jointly optimizing efficiency, performance and adaptability to distribution shifts.
\item We design a dimension-adaptive projection strategy for embedding tables that enables rapid, near-lossless knowledge transfer with theoretical guarantees.
\item Extensive experiments demonstrate that CrossAdapt significantly accelerates knowledge transfer while consistently outperforming standard distillation baselines across public benchmarks and industrial deployments.\footnote{The code is available at \href{https://github.com/wuyucheng2002/CrossAdapt}{https://github.com/wuyucheng2002/CrossAdapt}.}
\end{enumerate}

% Extensive experiments on public and industrial datasets show CrossAdapt significantly accelerates knowledge transfer and consistently outperforms standard distillation baselines.

% Extensive experiments demonstrate CrossAdapt significantly accelerates knowledge transfer and consistently outperforms standard distillation baselines across public benchmarks and industrial deployment.

\begin{figure}[t]
\centering
\includegraphics[width=0.9\linewidth]{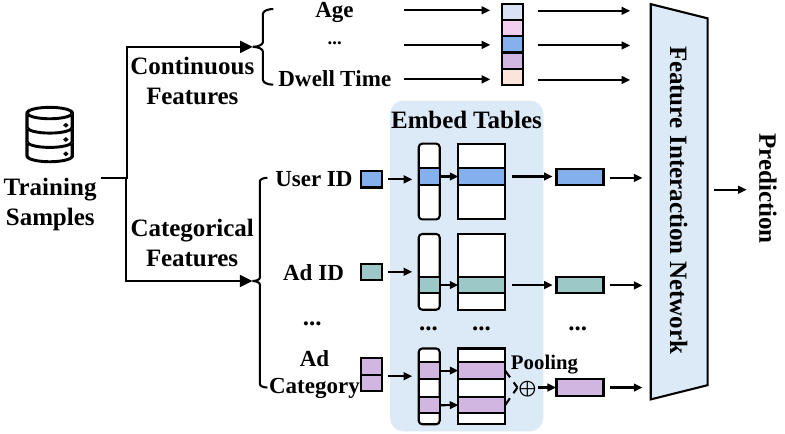}
\vspace{-1em}
\caption{A typical user response prediction model architecture, composed of embedding tables and feature interaction networks.}
\label{fig:CTR_model}
\vspace{-1em}
\end{figure}

\section{Problem Formulation}

We study efficient knowledge transfer in user response prediction. Given a feature vector $\mathbf{x} \in \mathcal{X}$ with categorical features $\mathbf{x}^{\text{cat}}$ (e.g., user/item identifiers) and numerical features $\mathbf{x}^{\text{num}}$ (e.g., behavioral statistics), a prediction model $f(\cdot; \theta)$ estimates the click/conversion probability $p(y=1|\mathbf{x}; \theta)$, where $y \in \{0, 1\}$ denotes the binary label.

\begin{definition}[Model Architecture]
\label{def:ctr_model}
A user response prediction model $f(\cdot; \theta)$ comprises two principal components. The \textbf{embedding tables} $E \in \mathbb{R}^{V \times d}$ transform sparse categorical features into dense vectors, where $V$ denotes vocabulary size (typically ranging from millions to billions) and $d$ is embedding dimension. These tables typically constitute the majority of model parameters~\cite{naumov2019deep}. The \textbf{interaction networks} $\theta^{\text{net}}$ capture feature dependencies through architectures including multilayer perceptrons, factorization machines, attention mechanisms, and transformers. The complete parameter set is $\theta = \{E, \theta^{\text{net}}\}$, as illustrated in Figure~\ref{fig:CTR_model}.
\end{definition}

This hybrid structure introduces a fundamental challenge when deploying new architectures: differences in embedding dimensions or interaction designs preclude direct parameter reuse. We formalize this challenge as the \textit{model switching cost}, which captures both computational overhead and performance impact.

% \begin{definition}[Model Switching Cost]
% \label{def:switching_cost}
% The total cost $\mathcal{C}_{\text{switch}}$ of transitioning from a teacher model $f_T(\cdot; \theta_T)$ to a student model $f_S(\cdot; \theta_S)$ decomposes into computational overhead and performance degradation:
% \begin{equation}
% \mathcal{C}_{\text{switch}} = \mathcal{C}_{\text{comp}} + \mathcal{C}_{\text{perf}}
% \end{equation}
% The \textit{computational overhead} $\mathcal{C}_{\text{comp}} = C_{\text{iter}} \cdot N$ measures the resources/time required to train and deploy the new model, where $C_{\text{iter}}$ denotes the per-sample cost and $N = |\mathcal{D}|$ is the dataset size. In industrial systems with trillions of training samples, full retraining may require months of distributed computation. The \textit{performance degradation} quantifies the loss in predictive quality:
% \begin{equation}
% \mathcal{C}_{\text{perf}} = 
% \mathbb{E}_{(\mathbf{x}, y) \sim \mathcal{D}^{\text{test}}} 
% \left[
% \mathcal{L}(f_S(\mathbf{x}; \theta_S), y)
% -
% \mathcal{L}(f_T(\mathbf{x}; \theta_T), y)
% \right]
% \end{equation}
% \end{definition}

\begin{definition}[Model Switching Cost]
\label{def:switching_cost}
The total cost $\mathcal{C}_{\text{switch}}$ of transitioning from teacher model $f_T(\cdot; \theta_T)$ to student model $f_S(\cdot; \theta_S)$ decomposes into two parts:
\begin{equation}
\mathcal{C}_{\text{switch}} = \mathcal{C}_{\text{comp}} + \mathcal{C}_{\text{perf}}
\end{equation}
The \textit{computational overhead} $\mathcal{C}_{\text{comp}} = C_{\text{iter}} \cdot N$ measures resources/time required to train the new model, where $C_{\text{iter}}$ denotes per-sample cost and $N = |\mathcal{D}|$ is dataset size. In industrial systems, full retraining may require months of distributed computation. The \textit{performance degradation} quantifies predictive quality loss:
\begin{equation}
\mathcal{C}_{\text{perf}} = 
\mathbb{E}_{(\mathbf{x}, y) \sim \mathcal{D}^{\text{test}}} 
\left[
\mathcal{L}(f_S(\mathbf{x}; \theta_S), y)
-
\mathcal{L}(f_T(\mathbf{x}; \theta_T), y)
\right]
\end{equation}
\end{definition}

\textbf{Data Specification.}
We organize data into four temporally ordered subsets. The \textit{historical data} $\mathcal{D}^{\text{hist}}$ represents earliest records no longer accessible due to storage or privacy constraints. The \textit{training data} $\mathcal{D}^{\text{train}}$ consists of available offline samples collected after $\mathcal{D}^{\text{hist}}$ for training the student. The \textit{online data} $\mathcal{D}^{\text{online}}$ arrives sequentially in real time for continual adaptation under distribution shift. The \textit{test data} $\mathcal{D}^{\text{test}}$ is drawn from the most recent period for evaluation. Crucially, the teacher $f_T$ has learned from both $\mathcal{D}^{\text{hist}}$ and $\mathcal{D}^{\text{train}}$, whereas the student $f_S$ cannot access $\mathcal{D}^{\text{hist}}$, creating a knowledge gap our framework aims to bridge.

\begin{problem}[Efficient Model Knowledge Transfer]
\label{prob:transition}
Given a teacher model $f_T(\cdot; \theta_T)$ trained on $\mathcal{D}^{\text{hist}} \cup \mathcal{D}^{\text{train}}$, and a student architecture $f_S(\cdot; \theta_S)$ with potentially different dimensions and designs, the objective is to design a transfer framework $\phi$ that minimizes the switching cost $\mathcal{C}_{\text{switch}}(\phi)$ using only $\mathcal{D}^{\text{train}}$ and streaming $\mathcal{D}^{\text{online}}$.
\end{problem}

\begin{figure}[t]
\centering
\includegraphics[width=\linewidth]{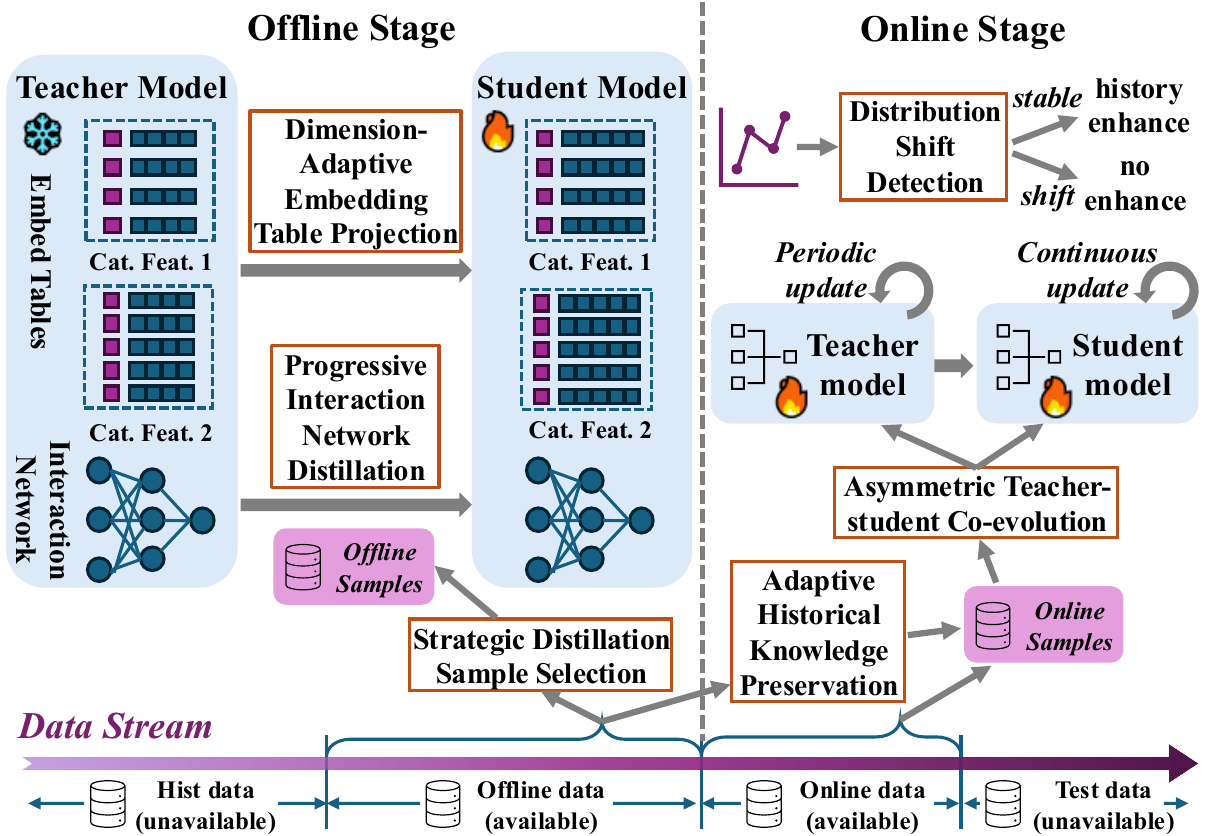}
\vspace{-1em}
\caption{Overview of CrossAdapt with offline cross-architecture transfer and online adaptive co-distillation.}
\label{fig:framework}
\vspace{-1em}
\end{figure}

\section{CrossAdapt Framework}

The proposed CrossAdapt framework adopts a two-stage knowledge transfer paradigm that effectively reduces model switching costs through efficient distillation and adaptive online learning, as illustrated in Figure~\ref{fig:framework}.

\subsection{Offline Cross-Architecture Transfer}

he offline stage addresses \textbf{Challenge 1} through efficient knowledge transfer despite architectural heterogeneity, leveraging three complementary techniques: dimension-adaptive embedding projection, progressive interaction network distillation, and strategic sampling.

\subsubsection{Dimension-Adaptive Embedding Table Projection}
\label{sec:embedding_projection}
% Embedding tables encode semantic representations of categorical features (e.g., user IDs, item IDs, categories) and typically account for the majority of model parameters in pCTR/pCVR models \cite{wang2017deep,lai2023adaembed,liu2021learnable}. The core value of these embeddings lies in the \textit{relationships} they capture: similar items should have similar embeddings, complementary items should exhibit specific angular relationships, and user-item affinities should be preserved through inner products \cite{mikolov2013distributed}.
% Traditional distillation requires iterative training over the entire parameter space, incurring substantial computational costs while potentially failing to preserve critical relational structures. We propose a parameter-free projection method that guarantees optimal preservation through mathematical transformations.

Embedding tables encode semantic representations of categorical features (e.g., user/item IDs, categories) and typically constitute the majority of pCTR/pCVR model parameters \cite{wang2017deep,lai2023adaembed,liu2021learnable}. The core value of these embeddings lies in the \textit{relationships} they capture: similar items should have similar embeddings, complementary items should exhibit specific angular relationships, and user-item affinities should be preserved through inner products \cite{mikolov2013distributed}.
Traditional distillation requires iterative training over the entire parameter space, incurring substantial costs while potentially failing to preserve relational structures. We propose a parameter-free projection method guaranteeing optimal preservation through mathematical transformations.

% Let $E_T \in \mathbb{R}^{V \times d_T}$ and $E_S \in \mathbb{R}^{V \times d_S}$ denote the teacher and student embedding matrices, where $d_T$, $d_S$ are embedding dimensions. We design optimal mappings for three scenarios: 
% \textit{Case 1: Equal dimensions} ($d_S = d_T$). When architectures share the same embedding dimension, we directly copy: $E_S = E_T$. 
% \textit{Case 2: Dimension expansion} ($d_S > d_T$). When upgrading to a model with higher capacity, we construct an orthogonal projection matrix through QR decomposition \cite{gander1980algorithms}:
% \begin{equation}
% \begin{aligned}
% R &\sim \mathcal{N}(0, 1)^{d_S \times d_S} \\
% Q&, \_ = \text{QR}(R) \quad \text{where } Q \in \mathbb{R}^{d_S \times d_S} \text{ is orthogonal}\\
% W &= Q[:, 1:d_T]^T \in \mathbb{R}^{d_T \times d_S}, \quad E_S = E_T \cdot W
% \end{aligned}
% \end{equation} 
% \textit{Case 3: Dimension reduction} ($d_S < d_T$). When computational constraints require a smaller model, we apply Principal Component Analysis (PCA) to maximally preserve variance \cite{abdi2010principal}:
% \begin{equation}
% \begin{aligned}
% \mu &= \frac{1}{V}\sum_{i=1}^V E_T[i,:]^\top, \quad \bar{E}_T = E_T - \mathbf{1}_V \mu^\top \\
% C &= \frac{1}{V}\bar{E}_T^\top \bar{E}_T \in \mathbb{R}^{d_T \times d_T}, \quad U, \Lambda, U^\top = \text{EigenDecomp}(C) \\
% W &= U[:, 1:d_S] \in \mathbb{R}^{d_T \times d_S}, \quad E_S = \bar{E}_T \cdot W + \mathbf{1}_V \mu^\top
% \end{aligned}
% \end{equation}

Let $E_T \in \mathbb{R}^{V \times d_T}$ and $E_S \in \mathbb{R}^{V \times d_S}$ denote teacher and student embedding matrices with dimensions $d_T$, $d_S$. We design optimal mappings for three scenarios: 
\textit{Case 1: Equal dimensions} ($d_S = d_T$). We directly copy: $E_S = E_T$. 
\textit{Case 2: Dimension expansion} ($d_S > d_T$). When upgrading to higher capacity, we construct an orthogonal projection matrix through QR decomposition \cite{gander1980algorithms}:
\begin{equation}
\begin{aligned}
R &\sim \mathcal{N}(0, 1)^{d_S \times d_S}, \quad Q, \_ = \text{QR}(R) \text{ where } Q \in \mathbb{R}^{d_S \times d_S} \\
W &= Q[:, 1:d_T]^T \in \mathbb{R}^{d_T \times d_S}, \quad E_S = E_T \cdot W
\end{aligned}
\end{equation} 
\textit{Case 3: Dimension reduction} ($d_S < d_T$). When computational constraints require a smaller model, we apply Principal Component Analysis (PCA) to maximally preserve variance \cite{abdi2010principal}:
\begin{equation}
\begin{aligned}
\mu &= \frac{1}{V}\sum_{i=1}^V E_T[i,:]^\top, \quad \bar{E}_T = E_T - \mathbf{1}_V \mu^\top \\
C &= \frac{1}{V}\bar{E}_T^\top \bar{E}_T \in \mathbb{R}^{d_T \times d_T}, \quad U, \Lambda, U^\top = \text{EigenDecomp}(C) \\
W &= U[:, 1:d_S] \in \mathbb{R}^{d_T \times d_S}, \quad E_S = \bar{E}_T \cdot W + \mathbf{1}_V \mu^\top
\end{aligned}
\end{equation}

\textit{Intuition.} The PCA projection preserves the principal directions of variation in the feature space. For feature embeddings, these directions capture the main semantic relationships (e.g., user preferences, item categories), while discarding noise in minor components.

\textbf{Theoretical Guarantees.} The following theorem shows that our projection method optimally preserves feature relationships; the full proof is given in Appendix~\ref{app:proof}.

\begin{theorem}[Optimal Embedding Projections]
\label{thm:embedding_projection}
Let $\bar{E}_T = E_T - \mathbf{1}_V \mu^\top$ be the centered teacher embedding matrix and $C = \frac{1}{V}\bar{E}_T^\top \bar{E}_T$ be its covariance matrix with eigendecomposition $C = U\Lambda U^\top$ where $\Lambda = \mathrm{diag}(\lambda_1, \ldots, \lambda_{d_T})$ and $\lambda_1 \geq \cdots \geq \lambda_{d_T} \geq 0$. Then:
\textit{(i)} When $d_S = d_T$, direct copying preserves all pairwise inner products exactly.
\textit{(ii)} When $d_S > d_T$, any orthogonal expansion $W$ satisfying $WW^\top = I_{d_T}$ preserves all pairwise inner products exactly.
\textit{(iii)} When $d_S < d_T$ with $d_S \leq \mathrm{rank}(\bar{E}_T)$, the PCA projection $W = U[:, 1:d_S]$ achieves the minimum inner product distortion over all rank-$d_S$ projections, with Gram matrix error $\|G_T - G_S\|_F^2 = V^2 \sum_{k=d_S+1}^{d_T} \lambda_k^2$, where $G_T = \bar{E}_T \bar{E}_T^\top$ and $G_S = \bar{E}_T W W^\top \bar{E}_T^\top$.
\end{theorem}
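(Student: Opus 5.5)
The plan is to reduce all three parts to statements about the Gram matrix of the (centered) embedding rows, because pairwise inner products are exactly the entries of $EE^\top$. For \emph{(i)}, $E_S=E_T$ gives $E_SE_S^\top=E_TE_T^\top$ immediately, so there is nothing to prove beyond noting this identity. For \emph{(ii)}, I would write $E_SE_S^\top = E_T W W^\top E_T^\top$ and use the hypothesis $WW^\top=I_{d_T}$ to collapse this to $E_TE_T^\top$. I would also check that the QR construction in Case~2 actually satisfies the hypothesis. Here $W=Q[:,1{:}d_T]^\top$, so $WW^\top = Q[:,1{:}d_T]^\top Q[:,1{:}d_T]=I_{d_T}$, since the columns of an orthogonal $Q$ are orthonormal. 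The same argument applies verbatim to the centered matrix $\bar E_T$, so either convention for ``inner products'' is covered.

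For \emph{(iii)}, I would first compute the error of the PCA choice exactly. I would set $P=WW^\top=U_{d_S}U_{d_S}^\top$, where $U_{d_S}=U[:,1{:}d_S]$; this is the orthogonal projector onto the top $d_S$ eigenvectors of $C$. Then $G_T-G_S=\bar E_T(I-P)\bar E_T^\top$. Since $\bar E_T^\top\bar E_T = VC = U(V\Lambda)U^\top$, I would take the thin SVD $\bar E_T = \tilde U \Sigma U^\top$ with $\sigma_k^2 = V\lambda_k$. Substituting gives $G_T=\tilde U\Sigma^2\tilde U^\top$, and $G_T-G_S=\tilde U\,\Sigma\,\mathrm{diag}(0,\dots,0,1,\dots,1)\,\Sigma\,\tilde U^\top$, whose nonzero eigenvalues are $V\lambda_k$ for $k>d_S$. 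The Frobenius norm squared is the sum of squared eigenvalues of this symmetric matrix, which is $V^2\sum_{k>d_S}\lambda_k^2$, as claimed.

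The remaining and main step is optimality. For any $W\in\mathbb{R}^{d_T\times d_S}$ (orthonormal columns, or more generally any matrix giving a rank-$d_S$ projection), the matrix $G_S=\bar E_T WW^\top\bar E_T^\top$ has rank at most $d_S$. By the Eckart--Young--Mirsky theorem applied to the symmetric PSD matrix $G_T$, every rank-$\le d_S$ matrix $M$ satisfies $\|G_T-M\|_F^2\ge\sum_{k>d_S}\mu_k^2$, where $\mu_k=V\lambda_k$ are the eigenvalues of $G_T$. This lower bound equals the value computed above, so the PCA projection attains the minimum. The subtle point I expect to need care with is specifying the feasible class: ``rank-$d_S$ projections'' should mean $W$ with $WW^\top$ an orthogonal projector of rank $d_S$. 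The Eckart--Young bound holds over the larger class of all rank-$\le d_S$ matrices, so it covers this class regardless. The hypothesis $d_S\le\mathrm{rank}(\bar E_T)$ ensures the top $d_S$ eigenvalues are positive, so the minimizer is genuinely rank $d_S$ and the eigenvector choice is meaningful. Finally, I would remark that adding back $\mathbf{1}_V\mu^\top$ in Case~3 does not affect the centered Gram comparison, which is the quantity the theorem is stated in terms of.
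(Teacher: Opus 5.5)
Your proposal is correct. Parts (i) and (ii) match the paper. The error computation in (iii) is an equivalent variant: you diagonalize $G_T-G_S$ directly through an SVD aligned with $U$, while the paper uses trace cyclicity to move to $d_T\times d_T$ matrices and then expands $I-WW^\top=\sum_{k>d_S}u_ku_k^\top$.

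The genuine difference is the optimality step in (iii). The paper applies Eckart--Young--Mirsky to $\bar E_T$ itself. That shows $W=U[:,1{:}d_S]$ minimizes the reconstruction error $\|\bar E_T-\bar E_TWW^\top\|_F^2=V\sum_{k>d_S}\lambda_k$ (equivalently, maximizes retained variance). It then only evaluates the Gram error at the PCA choice. Strictly, this does not show the Gram error is minimal, because reconstruction error and Gram distortion are different objectives. Also, the paper's intermediate identity $\mathrm{tr}((I-WW^\top)C(I-WW^\top)C)=\mathrm{tr}((I-WW^\top)C^2(I-WW^\top))$ holds only when $WW^\top$ commutes with $C$, which is true for the PCA choice but not for general $W$.

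You instead apply Eckart--Young--Mirsky to the PSD matrix $G_T$, whose nonzero singular values are $V\lambda_k$. This gives the lower bound $V^2\sum_{k>d_S}\lambda_k^2$ over every matrix of rank at most $d_S$, and the PCA projection attains it. That proves the ``minimum inner product distortion'' claim exactly as stated, and in a stronger form, since the comparison class is larger than projections. In exchange, the paper's route supplies the retained-variance interpretation it uses to guide the choice of $d_S$.

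One small point of care: writing $\bar E_T=\widetilde U\Sigma U^\top$ with the full eigenvector matrix $U$ presumes $V\ge d_T$. It also needs the SVD to be built from that particular $U$ (set $\widetilde u_k=\bar E_Tu_k/\sigma_k$ for $\sigma_k>0$ and complete orthonormally), which matters when eigenvalues repeat. Writing $\bar E_T=\sum_{\sigma_k>0}\sigma_k\widetilde u_ku_k^\top$ avoids both issues.
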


This theoretical foundation guarantees optimal preservation of semantic relationships between categorical features. The projection requires only $O(d_T^3)$ for eigendecomposition and $O(V \cdot d_T \cdot d_S)$ for applying the transformation, making it significantly more efficient than iterative distillation methods requiring multiple forward-backward passes over the entire parameter space.

\subsubsection{Progressive Interaction Network Distillation}
After embedding transfer, directly training the entire student model end-to-end poses significant risk: randomly initialized interaction networks produce noisy gradients that can rapidly deteriorate the transferred embedding structure. To preserve embedding quality while effectively training interaction networks, we employ a progressive two-phase strategy decoupling optimization of different components.

We initialize student embeddings using dimension-adaptive projection, providing strong initialization capturing teacher feature relationships. In the first phase, we freeze embeddings to preserve transferred relationships and train only interaction networks to align with teacher predictions:
\begin{equation}
\min_{\theta_S^{\text{net}}} \mathcal{L}_{\text{net}} = \mathcal{L}_{\text{BCE}}(p_S, y) + \lambda \mathcal{L}_{\text{KD}}(p_S, p_T)
\end{equation}
where $\lambda$ balances task learning and knowledge transfer. The knowledge distillation loss for binary CTR/CVR prediction is formulated as cross-entropy between teacher and student predictions \cite{hinton2015distilling}:
$\mathcal{L}_{\text{KD}} = -\frac{1}{B}\sum_{i=1}^B \left[ p_T \log p_S + (1-p_T) \log (1-p_S) \right]$.
By freezing embeddings, interaction networks learn effective feature interactions based on stable representations, avoiding co-adaptation where both components deteriorate simultaneously. 

Once interaction networks reach competent performance, we unfreeze all parameters for joint optimization:
\begin{equation}
\min_{\theta_S} \mathcal{L}_{\text{joint}} = \mathcal{L}_{\text{BCE}}(p_S, y) + \lambda \mathcal{L}_{\text{KD}}(p_S, p_T)
\end{equation}
Both components are now sufficiently trained to co-evolve without mutual degradation. This progressive strategy proves more stable than end-to-end training in transfer learning \cite{howard2018universal}.

\begin{figure}[t]
\centering
\includegraphics[width=\linewidth]{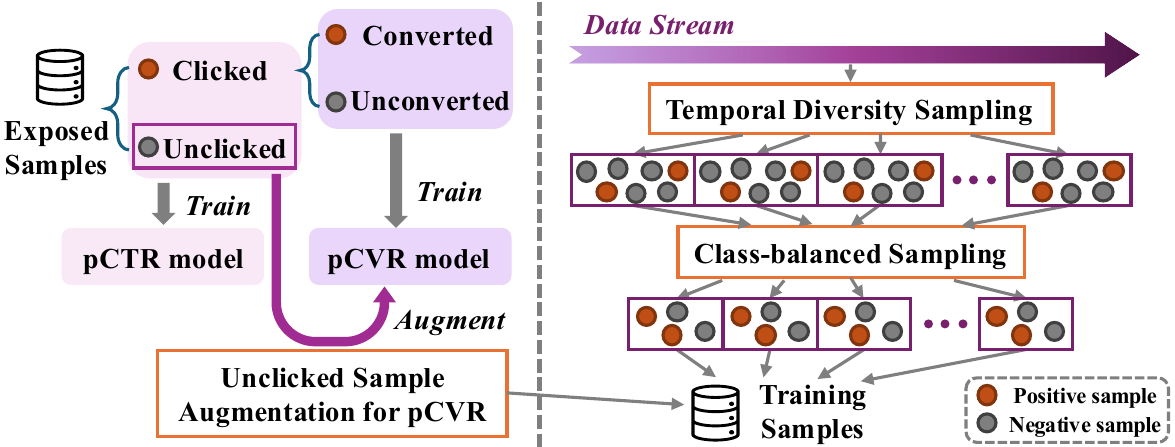}
\vspace{-1em}
\caption{Strategic distillation sample selection. Left: Sample space composition showing clicked samples for standard pCVR training and the augmentation with unclicked samples. Right: Sampling pipeline applying temporal diversity sampling followed by class-balanced sampling to construct the training set.}
\label{fig:sampling}
\vspace{-1em}
\end{figure}

\subsubsection{Strategic Distillation Sample Selection}
\label{sec:sampling}
While the teacher model is trained on the entire history dataset, using all samples of $\mathcal D_\text{train}$ for distillation is not necessary. Distillation on massive datasets incurs prohibitive computational costs and introduces redundancy, as many samples convey similar patterns. Instead, strategic sampling of informative and representative samples can achieve comparable knowledge transfer with reduced training time.
The key is to balance three critical factors: (1) \textit{information density}--samples that contain rich signals about user preferences and behaviors, (2) \textit{sample diversity}--coverage across different patterns to elicit comprehensive knowledge from the teacher, and (3) \textit{sample space completeness}--leveraging samples beyond observable labels through distillation. We employ a three-dimensional sampling strategy addressing these factors:

% \textbf{Class-balanced Sampling.} CTR/CVR data exhibits severe class imbalance with typical positive rates below 1\% \cite{he2014practical}. Positive samples (clicks) contain substantially richer information about user preferences than negative samples, as each click represents an active engagement revealing what users value, while non-clicks are often passive observations. Following insights from imbalanced learning \cite{chawla2002smote}, we oversample positive instances to achieve a target positive sampling ratio $r_{\text{pos}}$ and overall sampling ratio $r$:
% \begin{equation}\small
% \mathcal{D}_{\text{balanced}} = \Psi(\mathcal{D}_{\text{pos}}, r \cdot r_{\text{pos}}  |\mathcal{D}_{\text{train}}|) \cup \Psi(\mathcal{D}_{\text{neg}}, r  (1-r_{\text{pos}}) |\mathcal{D}_{\text{train}}|)
% \end{equation}
% where $\Psi(\mathcal{D}, n)$ denotes randomly sampling $n$ samples from dataset $\mathcal{D}$, and $\mathcal{D}_{\text{pos}}$ and $\mathcal{D}_{\text{neg}}$ represent the positive and negative samples in $\mathcal{D}_{\text{train}}$, respectively.
% This emphasizes learning from informative positive patterns while maintaining sufficient negative examples for discrimination.

\textbf{Class-balanced Sampling.} CTR/CVR data exhibits severe class imbalance with positive rates below 1\% \cite{he2014practical}. Positive samples contain richer information about user preferences, as clicks/conversions represent active engagement revealing user values, while non-clicks/non-conversions are passive observations. Following imbalanced learning insights \cite{chawla2002smote}, we oversample positive instances to achieve target positive ratio $r_{\text{pos}}$ and overall ratio $r$:
\begin{equation}\small
\mathcal{D}_{\text{balanced}} = \Psi(\mathcal{D}_{\text{pos}}, r \cdot r_{\text{pos}}  |\mathcal{D}_{\text{train}}|) \cup \Psi(\mathcal{D}_{\text{neg}}, r (1-r_{\text{pos}}) |\mathcal{D}_{\text{train}}|)
\end{equation}
where $\Psi(\mathcal{D}, n)$ denotes randomly sampling $n$ samples from $\mathcal{D}$, and $\mathcal{D}_{\text{pos}}$, $\mathcal{D}_{\text{neg}}$ represent positive and negative samples in $\mathcal{D}_{\text{train}}$.
This emphasizes learning from informative positive patterns while maintaining sufficient negatives for discrimination.

% \textbf{Temporal Diversity Sampling.} User behavior exhibits temporal patterns including daily/weekly periodicity, seasonal trends, and long-term evolution \cite{koren2009collaborative}. To capture these dynamics, we employ temporal diversity sampling with two motivations: (1) maintaining sample diversity to elicit more comprehensive knowledge from the teacher model across different temporal regimes, and (2) preserving temporal ordering to adapt to online learning scenarios where more recent samples naturally carry greater influence on model training. We partition data into $K$ temporal blocks and sample uniformly:
% \begin{equation}\small
% \mathcal{D}'_{\text{train}} = \bigcup_{k=1}^K \left[ \Psi(\mathcal{D}_{\text{pos}}^{(k)}, \frac{r  \cdot r_{\text{pos}} |\mathcal{D}_\text{train}|}{K}) \cup \Psi(\mathcal{D}_{\text{neg}}^{(k)}, \frac{r  (1-r_{\text{pos}})  |\mathcal{D}_\text{train}|}{K}) \right]
% \end{equation}
% where temporal blocks have equal time intervals. This ensures student is exposed to full spectrum of temporal dynamics captured by teacher, facilitating knowledge transfer across varing time periods.

\textbf{Temporal Diversity Sampling.} User behavior exhibits temporal patterns including periodicity, seasonal trends, and long-term evolution \cite{koren2009collaborative}. We employ temporal diversity sampling for two reasons: maintaining sample diversity to elicit comprehensive teacher knowledge across temporal regimes, and preserving temporal ordering where recent samples naturally carry greater influence. We partition data into $K$ equal temporal blocks and sample uniformly:
\begin{equation}\small
\mathcal{D}'_{\text{train}} = \bigcup_{k=1}^K \left[ \Psi(\mathcal{D}_{\text{pos}}^{(k)}, \frac{r \cdot r_{\text{pos}} |\mathcal{D}_\text{train}|}{K}) \cup \Psi(\mathcal{D}_{\text{neg}}^{(k)}, \frac{r (1-r_{\text{pos}}) |\mathcal{D}_\text{train}|}{K}) \right]
\end{equation}
This ensures the student is exposed to the full spectrum of temporal dynamics captured by the teacher, facilitating knowledge transfer across varying time periods.

% \textbf{Unclicked Sample Augmentation for pCVR.} 
% Traditional pCVR training uses only clicked samples (conversions as positives, non-conversions as negatives), ignoring unclicked exposures due to unobservable conversion labels. 
% This restriction severely limits the exploitable sample space, as clicked samples constitute only a small fraction of total impressions \cite{he2014practical}.
% CrossAdapt enables incorporating unclicked samples by using teacher predictions as pseudo-labels, thereby expanding the training distribution and enhancing knowledge transfer.
% We augment the pCVR training set by sampling unclicked exposures and assigning teacher-generated pseudo-CVR labels:
% \begin{equation}\small
% \mathcal{D}''_{\text{train}} = \mathcal{D}'_{\text{train}} \cup \{(\mathbf{x}_i, f_T^{\text{cvr}}(\mathbf{x}_i)) | \mathbf{x}_i \in \Psi(\mathcal{D}_{\text{unclick}}, r_{\text{unclick}} \cdot |\mathcal{D}'_{\text{train}}|)\}
% \end{equation}
% where $f_T^{\text{cvr}}$ denotes the teacher's pCVR output and $r_{\text{unclick}}$ controls the unclicked-to-clicked ratio.

\textbf{Unclicked Sample Augmentation for pCVR.} 
Traditional pCVR training uses only clicked samples (conversions as positives, non-conversions as negatives), ignoring unclicked exposures due to unobservable conversion labels. 
This severely limits the sample space, as clicks constitute only a small fraction of impressions \cite{he2014practical}.
We incorporates unclicked samples using teacher predictions as pseudo-labels, expanding the training distribution and enhancing knowledge transfer.
We augment the training set by sampling unclicked exposures with teacher-generated pseudo-CVR labels:
\begin{equation}\small
\mathcal{D}''_{\text{train}} = \mathcal{D}'_{\text{train}} \cup \{(\mathbf{x}_i, f_T^{\text{cvr}}(\mathbf{x}_i)) | \mathbf{x}_i \in \Psi(\mathcal{D}_{\text{unclick}}, r_{\text{unclick}} \cdot |\mathcal{D}'_{\text{train}}|)\}
\end{equation}
where $f_T^{\text{cvr}}$ denotes the teacher's pCVR output and $r_{\text{unclick}}$ controls the unclicked-to-clicked ratio.

\subsection{Online Adaptive Co-Distillation}
The online stage addresses \textbf{Challenge 2} through asymmetric teacher-student co-evolution and adaptive historical knowledge preservation for continuous adaptation to distribution shifts.

\subsubsection{Asymmetric Teacher-Student Co-evolution}

In online scenarios, data distributions and optimal parameters evolve continuously \cite{mcmahan2013ad}. Models must rapidly adapt to emerging patterns while preserving historical knowledge. However, knowledge distillation on streaming data often causes unstable supervision, as teacher models frequently update with new patterns \cite{kirkpatrick2017overcoming}.
To address this, we implement co-distillation where teacher and student train jointly but asymmetrically. The teacher serves as a stable knowledge anchor encoding robust patterns, while the student captures emerging trends. This asymmetry balances stability and adaptability: the teacher provides consistent supervision preventing the student from drifting, while gradually evolving to avoid obsolescence.

% \textbf{Student Model Update.} 
The student optimizes weighted task performance and knowledge distillation, updating at every step via gradient descent:
\begin{equation}
\theta_S^{t+1} = \theta_S^t - \eta_S \nabla_{\theta_S} [\mathcal{L}_\text{BCE}(p_S, y) + \lambda \cdot \mathcal{L}_\text{KD}(p_S, p_T)]
\end{equation}
Frequent updates enable quick response to distribution shifts, while distillation regularizes against overfitting to noisy patterns.

% \textbf{Teacher Model Stabilized Update.} 
The teacher accumulates gradients at every step but updates parameters less frequently for stable guidance. At step $t$, gradients accumulate recursively:
\begin{equation}
g_T^{t} = g_T^{t-1} + \nabla_{\theta_T} \mathcal{L}_\text{BCE}(p_T, y)
\end{equation}
Parameters update only at intervals of $\tau$, with accumulated gradients reset after each update:
\begin{equation}
\theta_T^{t+1}, g_T^{t+1} = \begin{cases}
\theta_T^t - \eta_T g^{t}_T, \; 0 & \text{if } t \bmod \tau = 0 \\
\theta_T^t, \; g^t_T & \text{otherwise}
\end{cases}
\end{equation}
where $\tau$ is the update interval (e.g., $\tau=10$). This accumulates gradients over $\tau$ steps, creating larger effective batch sizes for robust updates with reduced variance. The teacher updates only on task loss, maintaining its authoritative role. This asymmetric schedule ensures consistent targets across multiple student updates while allowing gradual assimilation of validated patterns, with $\tau$ controlling the stability-adaptability trade-off.

\subsubsection{Adaptive Historical Knowledge Preservation}
While dual model co-evolution provides stable online learning, knowledge distillation may suffer from \textit{catastrophic forgetting} in online scenarios. Inspired by experience replay \cite{rolnick2019experience}, we incorporate historical samples into current batches to prevent forgetting and increase diversity for robustness. However, this raises critical questions: \textit{when to incorporate historical samples?} and \textit{how to blend them with current data?} 

In online streaming user response prediction, distributions exhibit varying stability–from stationary periods where historical patterns remain valid, to rapid shifts where past knowledge becomes obsolete. Blindly mixing historical samples risks anchoring to outdated patterns, while ignoring history discards valuable knowledge that could accelerate distillation. 
This motivates an adaptive strategy that dynamically balances historical preservation and rapid adaptation based on distribution stability. 
The key insight is that \textit{batch diversity through historical samples} enhances distillation during stable periods by providing richer knowledge transfer signals, while \textit{focusing on current data consistency} accelerates adaptation during distribution shifts.

\textbf{Distribution Shift Detection.} We quantify distribution shift using distance metrics between feature distributions across consecutive time windows. Training data $\mathcal{D}^{\text{train}}$ is partitioned into $n$ consecutive windows $\{W_1, W_2, \ldots, W_n\}$ of equal size. For each adjacent pair $(W_i, W_{i+1})$, we measure distributional discrepancy using metrics such as Jensen-Shannon divergence, KL divergence, or Wasserstein distance, denoted $\mathbb{D}(\cdot, \cdot)$. For each feature dimension $j$, 
$\Delta_j^{(i)} = \mathbb{D}(P_j^{W_i}, P_j^{W_{i+1}})$, 
where $P_j^{W_i}$ and $P_j^{W_{i+1}}$ represent feature $j$'s distributions in windows $W_i$ and $W_{i+1}$. Distributions are estimated by type: numerical features use histogram binning with $b$ bins (typically $b=50$), while categorical features use empirical distributions from normalized counts. 
The overall shift metric aggregates across features and windows:
\begin{equation}
\Delta_\text{shift} = \frac{1}{n-1} \sum_{i=1}^{n-1} \left( \frac{1}{V} \sum_{j=1}^{V} \Delta_j^{(i)} \right)
\end{equation}
where $V$ is the total number of features. This averaging provides robust shift estimation by smoothing temporal fluctuations.

\textbf{Adaptive Enhancement Mechanism.} Based on $\Delta_\text{shift}$, the enhancement ratio $r_\text{enh}$ is computed as:
\begin{equation}\small
r_\text{enh} = \begin{cases}
0 ,\ \text{if } \Delta_\text{shift} > \theta_\text{high} \ \text{(major shift)} \\
k \cdot (1 - \Delta_\text{shift}/\theta_\text{high}),\  \text{if } \theta_\text{low} < \Delta_\text{shift} \leq \theta_\text{high} \ \text{(moderate shift)} \\
k ,\ \text{if } \Delta_\text{shift} \leq \theta_\text{low} \ \text{(stable)}
\end{cases}
\end{equation}
Under stable distributions ($\Delta_\text{shift} \leq \theta_\text{low}$), we enhance distillation by mixing historical samples to increase batch diversity; under shifting distributions ($\Delta_\text{shift} > \theta_\text{high}$), we focus on current data for rapid adaptation; under moderate shifts, we reduce enhancement proportionally. 
When $r_{\text{enh}} > 0$, we augment each streaming batch:
\begin{equation}
\mathcal{B}_\text{aug} = \mathcal{B}_t \cup \Psi(\mathcal{D}_\text{train}, r_\text{enh} \cdot |\mathcal{B}_t|)
\end{equation}
where $\mathcal{D}_\text{train}$ is the training pool, $|\mathcal{B}_t|$ is current batch size, and the augmented batch contains $r_\text{enh} \cdot |\mathcal{B}_t|$ total samples. This injects stable historical patterns without additional memory. The adaptive strategy balances enhanced distillation during stability with rapid adaptation during shifts.

\section{Offline Evaluation}
\subsection{Experimental Setup}
\label{sec:exp_setup}

Appendix~\ref{app:implementation_detail} provides detailed implementation information, including dataset descriptions, model configurations, hyperparameter settings, and experimental environment.

\subsubsection{Training Protocol} 
\label{sec:training_protocol}
The teacher model is trained on $\mathcal{D}^{\text{hist}} \cup \mathcal{D}^{\text{train}}$. The student conducts offline distillation on $\mathcal{D}^{\text{train}}$, followed by online co-distillation on $\mathcal{D}^{\text{online}}$, without access to $\mathcal{D}^{\text{hist}}$ (reflecting practical deployment constraints). During online learning, data is processed sequentially in temporal batches without shuffling.

\subsubsection{Baseline Methods}

We compare CrossAdapt with a range of baselines covering both non-distillation and conventional distillation methods.
\textbf{Non-Distillation Baselines.}  
\textit{(1) Scratch} trains the student from random initialization on $\mathcal{D}^{\text{train}} \cup \mathcal{D}^{\text{online}}$, serving as the standard no-transfer baseline.  
\textit{(2) Scratch-Online} trains only on $\mathcal{D}^{\text{online}}$ and was previously deployed in production.
%representing a data-limited case.  
\textit{(3) Full-Retrain} retrains the student from scratch on the entire dataset ($\mathcal{D}^{\text{hist}} \cup \mathcal{D}^{\text{train}} \cup \mathcal{D}^{\text{online}}$), providing an idealized upper bound that is infeasible in practice due to the inaccessibility of $\mathcal{D}^{\text{hist}}$.
\textbf{Traditional Knowledge Distillation.}  
\textit{(4) Vanilla-KD}~\cite{hinton2015distilling} applies standard distillation using KL divergence between teacher and student logits with temperature scaling.  
\textit{(5) Hint-KD}~\cite{romero2014fitnets} aligns intermediate representations via additional projection layers, focusing on embedding table alignment.  
\textit{(6) RKD}~\cite{park2019relational} preserves pairwise relational structures by matching distances and angles among samples.

\textbf{CrossAdapt Variants.}  
To analyze data sampling strategies in the offline stage, \textit{CrossAdapt-Full} trains on the entire $\mathcal{D}^{\text{train}}$, while \textit{CrossAdapt-Sample} uses a strategically selected 10\% subset (Sec.~\ref{sec:sampling}) to assess efficiency and performance under limited data.

\subsubsection{Evaluation Metrics}

We evaluate CTR prediction performance using two standard metrics.
\textbf{AUC} (Area Under the ROC Curve)~\cite{fawcett2006introduction,cheng2016wide} measures the probability that a model ranks a randomly chosen positive sample higher than a negative one. It is robust to class imbalance and serves as the primary indicator of ranking quality–higher AUC denotes stronger discriminative ability in capturing user preferences.
\textbf{LogLoss} (Logarithmic Loss)~\cite{wang2017deep,zhu2020ensembled} evaluates the deviation between predicted probabilities and ground-truth labels. As accurate probability calibration is essential for bidding and traffic allocation, LogLoss is widely used as a key business metric, where lower values indicate better calibration.
Following CTR research~\cite{wang2017deep,cheng2016wide}, changes at the 0.001 level in AUC or LogLoss are considered statistically and practically significant.

\begin{table*}[t]
\centering
\caption{Performance comparison on public datasets. Best student results (excluding Full-Retrain) in \textbf{bold}, second best \underline{underlined}. $\Delta$ denotes improvement over Scratch baseline (absolute difference for AUC/LogLoss, relative percentage for Time). The teacher model uses an MLP network with embedding dimension 8; students can be architecture-matched or architecture-mismatched.}
\vspace{-1em}
\label{tab:main_results_public}
\small
\resizebox{\linewidth}{!}{
\begin{tabular}{c|c|ccc|ccc|ccc}
\hline
\rowcolor{blue!15}
\textbf{Student} & & \multicolumn{3}{c|}{\textbf{Criteo}} & \multicolumn{3}{c|}{\textbf{Avazu}} & \multicolumn{3}{c}{\textbf{Criteo1T}} \\
\rowcolor{blue!15}
\textbf{Model} &\textbf{Method}  &AUC(\%)$\uparrow$ &Logloss$\downarrow$ &Time(s)$\downarrow$ &AUC(\%)$\uparrow$ &Logloss$\downarrow$ &Time(s)$\downarrow$&AUC(\%)$\uparrow$ &Logloss$\downarrow$ &Time(s)$\downarrow$ \\
\hline
\rowcolor{gray!10}
– &Teacher Model & 80.08±0.02 & 0.4526±0.0002 &– &74.58±0.07 &0.3961±0.0002 &– &77.32±0.02 &0.1331±0.0000 &– \\
\hline
\multirow{9}*{\makecell{MLP\\dim=8\\(matched)}} &Full-Retrain &80.19±0.01 &0.4523±0.0002 &159.7 &75.06±0.05 &0.3940±0.0005 &118.1 &77.59±0.03 &0.1328±0.0000 &300.7\\
\cline{2-11}
&Scratch  &79.90±0.03 &0.4556±0.0004 &70.0 &74.89±0.06 &0.3946±0.0007 
&66.0 &77.19±0.03 &0.1334±0.0000 &143.6\\
&Scratch-Online &78.79±0.10 &0.4667±0.0017 &\textbf{15.1} &73.29±0.52 &0.4032±0.0028 &\textbf{13.8}  &73.45±0.30 &0.1406±0.0008 &\textbf{19.2} \\
% \cline{2-11}
&Vanilla-KD  &80.15±0.01 &\underline{0.4525±0.0002} &74.0 &75.17±0.01 &0.3933±0.0002 &69.4 &77.44±0.02 & 0.1330±0.0000 &150.1\\
&Hint-KD &80.10±0.02 &0.4527±0.0001 &100.1  &74.79±0.08 &0.3951±0.0004 &86.9 &77.35±0.02 &0.1331±0.0000 &199.3\\
&RKD &80.10±0.01 &0.4527±0.0001 & 91.9 &75.05±0.04 &0.3937±0.0003 &81.4 &77.38±0.02 & 0.1330±0.0000 &186.2\\
\cline{2-11}
&CrossAdapt-Full &\textbf{80.29±0.01} & \textbf{0.4511±0.0002} & 133.6 & \textbf{75.38±0.08} & \textbf{0.3916±0.0004} & 118.5 & \textbf{77.66±0.02} & \textbf{0.1326±0.0000} & 269.0 \\
&CrossAdapt-Sample & \underline{80.17±0.02} & 0.4526±0.0005 & \underline{36.1} & \underline{75.32±0.12} & \underline{0.3920±0.0005} & \underline{25.0} & \underline{77.58±0.02} & \underline{0.1328±0.0000} & \underline{52.5} \\
\cline{2-11}
\rowcolor{orange!10}
&$\Delta$ (Sample) & \textcolor{blue}{+0.27} & \textcolor{blue}{-0.0030} & \textcolor{blue}{-48.4\%} & \textcolor{blue}{+0.43} & \textcolor{blue}{-0.0026} & \textcolor{blue}{-62.1\%} & \textcolor{blue}{+0.39} & \textcolor{blue}{-0.0006} & \textcolor{blue}{-63.4\%} \\
\hline
\multirow{9}*{\makecell{FiBiNET\\dim=16\\(mismatched)}} &Full-Retrain & 80.38±0.01 &0.4504±0.0002 &462.5 &75.42±0.06 &0.3917±0.0003 &327.6 &77.71±0.05 &0.1326±0.0001 &1101.8\\
\cline{2-11}
&Scratch & 80.08±0.01 &0.4544±0.0004 &381.1 &75.19±0.01 &0.3930±0.0004 &182.3 &77.59±0.03 &0.1328±0.0000 &300.7\\
&Scratch-Online &79.04±0.02 &0.4642±0.0003 &\textbf{52.9} &74.05±0.20 &0.3991±0.0012 &\textbf{37.4} &75.29±0.08 &0.1364±0.0002 &\textbf{61.4}\\
% \cline{2-11}
&Vanilla-KD &80.25±0.02 &0.4516±0.0002 & 260.2 &75.24±0.04 &0.3929±0.0003 &183.0 &77.47±0.02 &0.1329±0.0000 &524.4\\
&Hint-KD &80.17±0.01 &0.4522±0.0002 &292.7 &74.99±0.05 &0.3940±0.0002 &208.1 &77.38±0.03 &0.1330±0.0000 &586.7 \\
&RKD &80.25±0.02 &0.4516±0.0002 &266.0 &75.24±0.04 &0.3929±0.0003 &187.0 & 77.47±0.02 &0.1329±0.0000 &534.8 \\
\cline{2-11}
&CrossAdapt-Full &\textbf{80.33±0.01} & \textbf{0.4509±0.0001} & 739.7 & \textbf{75.47±0.07} & \textbf{0.3911±0.0004} & 322.5 & \textbf{77.67±0.02} & \textbf{0.1326±0.0000} & 975.1 \\
&CrossAdapt-Sample & \underline{80.27±0.01} & \underline{0.4515±0.0004} & \underline{109.6} & \textbf{75.47±0.10} & \underline{0.3912±0.0005} & \underline{69.9} & \underline{77.64±0.02} & \underline{0.1327±0.0000} & \underline{172.5} \\
\cline{2-11}
\rowcolor{orange!10}
&$\Delta$ (Sample) & \textcolor{blue}{+0.19} & \textcolor{blue}{-0.0029} & \textcolor{blue}{-71.2\%} & \textcolor{blue}{+0.28} & \textcolor{blue}{-0.0018} & \textcolor{blue}{-61.7\%} & \textcolor{blue}{+0.05} & \textcolor{blue}{-0.0001} & \textcolor{blue}{-42.6\%} \\
\hline
\end{tabular}}
\vspace{-1em}
\end{table*}

\subsection{Experimental Results}

\subsubsection{Overall Performance}
\label{sec:rq1}

Table~\ref{tab:main_results_public} shows CrossAdapt consistently achieves superior performance with significant efficiency gains over all baselines. CrossAdapt-Sample outperforms Scratch across datasets and architectures, achieving +0.27–0.43\% AUC improvements and 0.0006–0.0030 LogLoss reductions for MLP students, and +0.05–0.28\% AUC gains for FiBiNET students, while reducing training time by 43–71\%. These improvements are statistically significant despite using only 10\% of $\mathcal{D}^{\text{train}}$.
Compared with traditional distillation, CrossAdapt consistently performs better. Relative to Vanilla-KD, it attains comparable or higher AUC (+0.02–0.23\%) with over 50\% less training time; against Hint-KD and RKD, it improves AUC by +0.07–0.53\% while being 64–74\% faster. These results confirm CrossAdapt's two-stage design and strategic sampling effectively improve performance and efficiency, substantially reducing model switching cost. 

CrossAdapt-Full further approaches or surpasses the Full-Retrain upper bound, reaching 80.29\% AUC on Criteo (MLP) versus 80.19\% for the upper bound, showing its ability to recover historical knowledge through efficient transfer. In contrast, Scratch-Online suffers 1–4\% AUC drops, highlighting the importance of historical knowledge. CrossAdapt alleviates this by distilling past information while adapting to shifts, effectively mitigating switching issues.

Consistent gains on both MLP (matched) and FiBiNET (mismatched) students demonstrate strong cross-architecture generalization. Despite FiBiNET's larger dimensions and complex interactions, CrossAdapt consistently outperforms all baselines, validating that embedding projection effectively bridges heterogeneous spaces. More complex architectures yield superior performance, motivating upgrades. CrossAdapt naturally inherits these capacity-driven gains, demonstrating scalability with architectural sophistication.

\begin{figure}[t]
\centering
\includegraphics[width=0.9\linewidth]{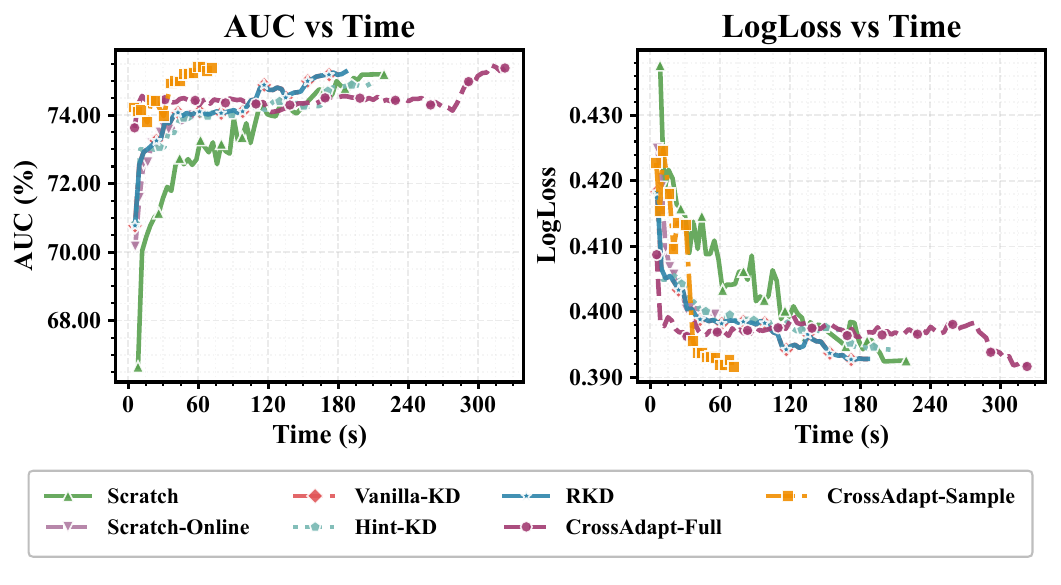}
\vspace{-1em}
\caption{Convergence curves on Avazu. CrossAdapt-Sample achieves faster convergence compared to baselines.}
\label{fig:time_avazu}
\vspace{-1em}
\end{figure}

\subsubsection{Convergence Analysis.}
Figure~\ref{fig:time_avazu} illustrates convergence on Avazu. CrossAdapt-Sample achieves the fastest convergence, reaching stable performance of $\geq$75.3\% AUC within 100 seconds, whereas Scratch requires over 180 seconds—a 44\% reduction in convergence time. Appendix~\ref{app:convergence_analysis} presents convergence on Criteo and Criteo1T with similar trends.

\begin{table*}[t]
\small
\centering
\caption{Ablation study on three datasets. We report AUC (\%), LogLoss, and absolute AUC changes (in percentage points) compared to the full CrossAdapt model. All configurations use FiBiNET as student architecture.}
\vspace{-1em}
\label{tab:ablation}
\small
\resizebox{\linewidth}{!}{
\begin{tabular}{l|ccc|ccc|ccc}
\hline
\rowcolor{blue!15}
& \multicolumn{3}{c|}{\textbf{Criteo}} & \multicolumn{3}{c|}{\textbf{Avazu}} & \multicolumn{3}{c}{\textbf{Criteo1T}} \\
\rowcolor{blue!15}
\textbf{Configuration} & AUC(\%)$\uparrow$ & $\Delta$ & LogLoss$\downarrow$ & AUC(\%)$\uparrow$ & $\Delta$ & LogLoss$\downarrow$ & AUC(\%)$\uparrow$ & $\Delta$ & LogLoss$\downarrow$ \\
\hline
\rowcolor{gray!10}
CrossAdapt-Sample & \textbf{80.27±0.01} & – & \textbf{0.4515±0.0004} & \textbf{75.47±0.10} & – & \textbf{0.3912±0.0005} & \textbf{77.64±0.02} & – & \textbf{0.1327±0.0000} \\
\hline
\rowcolor{orange!10}
\multicolumn{10}{c}{\textit{Offline Components}} \\
\hline
w/o Dimension-Adaptive Embedding Table Projection & 80.12±0.01 & \textcolor{blue}{-0.15} & 0.4529±0.0005 & 75.36±0.11 & \textcolor{blue}{-0.11} & 0.3917±0.0005 & 77.63±0.02 & \textcolor{blue}{-0.01} & \textbf{0.1327±0.0000} \\
w/o Progressive Interaction Network Distillation & 80.24±0.02 & \textcolor{blue}{-0.03} & 0.4517±0.0005 & 75.45±0.09 & \textcolor{blue}{-0.02} & 0.3913±0.0004 & 77.60±0.02 & \textcolor{blue}{-0.04} & \textbf{0.1327±0.0000} \\
w/o Strategic Distillation Sample Selection & 80.25±0.02 & \textcolor{blue}{-0.02} & 0.4517±0.0005 & 75.46±0.09 & \textcolor{blue}{-0.01} & \textbf{0.3912±0.0005} & 77.62±0.02 & \textcolor{blue}{-0.02} & \textbf{0.1327±0.0000} \\
\textbf{w/o Offline Stage} &80.04±0.01 & \textcolor{blue}{-0.23} &0.4535±0.0004 &75.24±0.07 & \textcolor{blue}{-0.23} &0.3923±0.0002 &77.45±0.02 & \textcolor{blue}{-0.19} &0.1331±0.0000 \\

\hline
\rowcolor{orange!10}
\multicolumn{10}{c}{\textit{Online Components}} \\
\hline
w/o Teacher-student Co-evolution & 80.20±0.01 & \textcolor{blue}{-0.07} & 0.4518±0.0003 & 75.20±0.04 & \textcolor{blue}{-0.27} & 0.3932±0.0002 & 77.44±0.02 & \textcolor{blue}{-0.20} & 0.1329±0.0000 \\
w/o Asymmetric Teacher-student Co-evolution & 80.20±0.01 & \textcolor{blue}{-0.07} & 0.4524±0.0005 & 75.46±0.02 & \textcolor{blue}{-0.01} & 0.3915±0.0002 & 77.59±0.02 & \textcolor{blue}{-0.05} & 0.1328±0.0000 \\
w/o Adaptive Historical Knowledge Preservation & 80.22±0.01 & \textcolor{blue}{-0.05} & 0.4520±0.0002 & 75.38±0.10  & \textcolor{blue}{-0.09} & 0.3916±0.0005  & 77.62±0.02 & \textcolor{blue}{-0.02} & \textbf{0.1327±0.0000} \\
\textbf{w/o Online Stage} &79.87±0.02 & \textcolor{blue}{-0.40} &0.4597±0.0012 &74.16±0.12 & \textcolor{blue}{-1.31} &0.4210±0.0034 &77.17±0.03 & \textcolor{blue}{-0.47} &0.1483±0.0013 \\
\hline
\end{tabular}
}
\vspace{-1em}
\end{table*}

\subsubsection{Ablation Study}
\label{sec:ablation}
Table~\ref{tab:ablation} reports ablation results on three datasets using FiBiNET as student, quantifying each component's contribution.
The embedding projection module shows the largest individual impact among offline components: removing it causes 0.01–0.15\% AUC drops, confirming its necessity for aligning heterogeneous embeddings across dimensions (8 vs. 16). Progressive training shows moderate importance with 0.02–0.04\% AUC reductions, indicating gradual unfreezing mitigates gradient interference. %While strategic sampling causes smaller AUC drops (0.01–0.02\%), it substantially improves efficiency. 
Online components demonstrate stronger effects under distribution shift. Removing teacher-student co-evolution (updating only the student) causes substantial degradation (up to 0.27\% AUC and 0.0020 LogLoss on Avazu), confirming outdated teacher guidance degrades performance. Eliminating asymmetric updates reduces AUC by up to 0.07\%, showing lower teacher update frequency stabilizes supervision. Removing adaptive enhancement results in 0.02–0.09\% AUC loss, demonstrating distribution-aware enhancement mitigates catastrophic forgetting and improves batch diversity. 
% Critically, removing the entire online stage causes the most severe degradation: 0.40–1.31\% AUC drops. 
% The large degradation on Avazu (-1.31\% AUC) suggests severe distribution shift, making continuous adaptation crucial.

Comparing entire stages reveals online adaptation (0.40–1.31\% AUC loss) is generally more critical than offline knowledge inheritance (0.19–0.23\% AUC loss), highlighting that continuous adaptation to evolving distributions is paramount. However, offline and online modules work synergistically: offline components establish efficient foundation, while online components balance adaptability and stability under shift. The combined framework achieves optimal performance by leveraging both effective inheritance and continuous adaptation.

\begin{table}[t]
\centering
\caption{Performance (AUC\%) of embedding dimension adaptation on Criteo. We compare CrossAdapt-Sample with and without Embedding Table Projection (ETP). $d_T$ and $d_S$ denote teacher and student embedding dimensions, respectively.}
\vspace{-1em}
\label{tab:dimension_adaptation}
\small
\resizebox{0.95\linewidth}{!}{
\begin{tabular}{lccccc}
\hline
\rowcolor{blue!15}
\textbf{$d_T \to d_S$} & \textbf{Teacher} & \textbf{Scratch} & \textbf{w/o ETP} & \textbf{w/ ETP}  &$\Delta$\\
\hline
\multicolumn{3}{l}{\textit{(Preservation)}} \\
8 $\to$ 8 & 80.08±0.02 & 80.09±0.00 & 80.12±0.01 & \textbf{80.27±0.01} & \textcolor{blue}{+0.18}\\
16 $\to$ 16 & 80.02±0.02 & 80.08±0.01 & 80.07±0.01 & \textbf{80.22±0.02} & \textcolor{blue}{+0.14}\\
\hline
\multicolumn{3}{l}{\textit{(Expansion)}} \\
8 $\to$ 16 & 80.08±0.02 & 80.08±0.01 & 80.12±0.01 & \textbf{80.27±0.01} & \textcolor{blue}{+0.19}\\
16 $\to$ 32 & 80.02±0.02 & 79.99±0.02 & 80.07±0.01 & \textbf{80.20±0.01} &  \textcolor{blue}{+0.21}\\
\hline
\multicolumn{3}{l}{\textit{(Reduction)}} \\
32 $\to$ 16 & 79.92±0.03 & 80.08±0.01 & 80.01±0.03 & \textbf{80.16±0.04} &  \textcolor{blue}{+0.08}\\
16 $\to$ 8 & 80.02±0.02 & 80.09±0.00 & 80.04±0.02 & \textbf{80.23±0.02} & \textcolor{blue}{+0.14}\\
\hline
\end{tabular}
}
\vspace{-1.5em}
\end{table}

\subsubsection{Robustness Analysis}
\label{sec:rq3}

Table~\ref{tab:dimension_adaptation} evaluates CrossAdapt's robustness across varying embedding dimensions. The Embedding Table Projection (ETP) module consistently improves over both Scratch and no-ETP baselines. In dimension preservation scenarios (8 $\to$ 8, 16 $\to$ 16), ETP yields +0.14–0.18\% AUC; expansion scenarios (8 $\to$ 16, 16 $\to$ 32) achieve +0.19–0.21\%, demonstrating ETP effectively leverages teacher knowledge when scaling dimensions; in reduction scenarios (32 $\to$ 16, 16 $\to$ 8), ETP provides +0.08–0.14\% improvements, indicating successful preservation of semantic relationships during compression. Overall, absolute AUC gains range from +0.08\% to +0.21\%, showing CrossAdapt's robustness across diverse dimension scenarios.
Additionally, Appendix~\ref{app:architecture_generalization} evaluates robustness across diverse architectures, and Appendix~\ref{app:additional_hyperparameter} analyzes hyperparameter effects, including sampling ratio, positive sampling ratio, distillation temperature, balance coefficient, historical augmentation ratio, and distribution shift detection metric.

\section{Online Deployment}
\label{sec:online_deployment}

To validate CrossAdapt's practical effectiveness, we deploy the framework on Tencent WeChat Channels advertising platform for CVR prediction in the re-ranking stage.

\subsection{Industrial Deployment Setup}
\label{sec:exp_online_setup}

% The deployment targets a production environment processing 25 million daily samples with 3,922 features. The model comprises 945 million embedding table keys on parameter servers and an interaction network on GPU servers. 
% The teacher model has been continuously optimized through online learning over several years.

The deployment targets a production environment processing $\sim$10M daily samples with $\sim$5K features. 
The model adopts a distributed architecture, where billion-level embedding tables are hosted on parameter servers, and the interaction network is deployed on GPU servers.
% The model comprises $\sim$1B embedding table keys stored on parameter servers and an interaction network deployed on GPU servers. 
The teacher model has been continuously optimized through long-term online learning.
Appendix~\ref{app:online_deploy_detail} provides implementation details.
We compare CrossAdapt against Vanilla-KD baseline:
\begin{itemize}[leftmargin=*,noitemsep]
    \item \textbf{CrossAdapt:} \textit{Stage 1}: One month of HDFS data (Dec 8, 2025 -- Jan 7, 2026) sampled at 20\% (6 days equivalent). Teacher frozen (cold backup from Jan 8, 2026 00:00), student updates. \textit{Stage 2}: 12 days of data (Jan 8--19, 2026) with teacher-student co-evolution.
    \item \textbf{Vanilla-KD:} 18 days of HDFS data (Jan 2--19, 2026), teacher from Jan 2, 2026 00:00 cold backup, both models update simultaneously.
\end{itemize}

\subsection{Online Experimental Results}

\begin{table}[t]\small
\centering
\caption{Online A/B test results comparing student models against production teacher on Tencent WeChat Channels (5-day test). $\Delta$ denotes relative change; negative AUC change and positive LogLoss/Bias indicate performance degradation.}
\vspace{-1em}
\label{tab:online_ab_test}
\begin{tabular}{lccc}
\hline
\rowcolor{blue!15}
\textbf{Method\quad} & \textbf{\quad$\Delta$AUC\quad} & \textbf{\quad$\Delta$LogLoss\quad} & \textbf{\quad$\Delta$pCVR Bias} \\
\hline
Vanilla-KD & -0.0024 & +0.2207 & +0.1026 \\
CrossAdapt & \textbf{-0.0010} & \textbf{+0.0844} & \textbf{+0.0440} \\
\hline
\rowcolor{orange!10}
\multicolumn{4}{l}{\textit{CrossAdapt reduces degradation by 58\%, 62\%, and 57\% respectively.}} \\
\hline
\end{tabular}
\vspace{-1em}
\end{table}

\begin{table}[t]\small
\centering
\caption{Impact of USAug on Vanilla-KD prediction quality. Models are evaluated on their prediction consistency with the production teacher model on exposed samples.}
\vspace{-1em}
\label{tab:usaug_results}
% \resizebox{\columnwidth}{!}{
\begin{tabular}{lcccc}
\hline
\rowcolor{blue!15}
\textbf{Model} & \textbf{Spearman} $\uparrow$ & \textbf{NDCG@5} $\uparrow$ & \textbf{NDCG@10} $\uparrow$ \\
\hline
w/o USAug & 0.5920 & 0.5379 & 0.5966  \\
w/ USAug & \textbf{0.9366} & \textbf{0.9887} & \textbf{0.9909}  \\
\hline
\rowcolor{orange!10}
Relative Improv. & +58.2\% & +83.8\% & +66.1\% \\
\hline
\end{tabular}
% }
\vspace{-1em}
\end{table}

We conducted a 5-day online A/B test on Tencent WeChat Channels advertising platform processing $\sim$10M daily samples. Table~\ref{tab:online_ab_test} compares student models against the production teacher model.
CrossAdapt substantially outperforms Vanilla-KD across all metrics, reducing AUC degradation by 58\% (from -0.0024 to -0.0010), LogLoss increase by 62\% (from +0.2207 to +0.0844), and pCVR bias by 57\% (from +0.1026 to +0.0440). The bias reduction is particularly valuable, as biased conversion predictions directly impact bidding strategies and budget allocation in real-time advertising.
CrossAdapt successfully reduces model switching costs in large-scale industrial systems, enabling faster deployment of improved model architectures while maintaining competitive performance.

For this pCVR scenario, we specifically evaluate the Unclicked Sample Augmentation (USAug) module (Sec.~\ref{sec:sampling}) on Vanilla-KD. Table~\ref{tab:usaug_results} demonstrates USAug's effectiveness in addressing sample selection bias.
Without USAug, the student model trained solely on clicked samples exhibits significant prediction bias on exposed items, achieving only 0.5920 Spearman correlation with the production teacher. Augmenting the training set with unclicked samples dramatically improves alignment: Spearman correlation increases by 58.2\% to 0.9366, while NDCG@5 and NDCG@10 improve by 83.8\% and 66.1\%. These results validate that expanding the training distribution to cover unclicked exposures mitigates estimation bias and enhances knowledge transfer fidelity across the full item space.

% \section{Online Deployment}

% \subsection{Industrial Deployment}

% We further evaluate CrossAdapt on a large-scale social media platform, which contains user interaction logs with billions of samples. The teacher model has been continuously optimized through online learning over several years. This dataset reflects real-world deployment challenges at industrial scale.

% \subsection{A/B Test Results}
% We evaluate CrossAdapt on an industrial dataset, serving billions of users. Offline evaluation after 5 days of online training shows that CrossAdapt-Sample achieves XX.XX\% AUC, outperforming Scratch by +XX\% and traditional KD methods by +XX–XX\%, with a XX\% LogLoss reduction, indicating improved probability calibration for bidding and traffic allocation.
% In 14-day online A/B testing with 5\% traffic per bucket, CrossAdapt yields a +XX\% CTR lift, substantially exceeding Vanilla-KD's +XX\%. User engagement also improves: dwell time rises by XXs (XXs $\to$ XXs), completion rate by XX percentage points (XX\% $\to$ XX\%), and share rate by XX points (XX\% $\to$ XX\%).

% \finding{\textbf{RQ5:} \textbf{CrossAdapt demonstrates strong performance and cost-effectiveness in industrial deployment.} On WeChat's billion-scale dataset, it achieves +1.72\% AUC and bridges 97.3\% of the teacher-student gap. Online A/B testing confirms +1.47\% CTR lift with statistical significance ($p<0.001$). The 49.1\% training time reduction translates to \$10,000-15,000 annual savings per model, validating production readiness.}

\section{Related Work} 

Knowledge distillation (KD) focuses on transferring learned representations from a large teacher model to a smaller student model by aligning output distributions or intermediate feature spaces \cite{hinton2015distilling, romero2014fitnets, zagoruyko2016paying}.  
Knowledge transfer, in contrast, more broadly refers to reusing information (e.g., parameters, embeddings, or relational structures) across different models or tasks, without requiring strict teacher-student supervision \cite{pan2009survey}.
In recommendation and user response prediction, KD has been widely used to accelerate inference by compressing deep interaction networks \cite{tang2018ranking, zhu2020ensemble}, to improve model generalization, robustness, and efficiency \cite{liu2020general,kang2021topology}.  
However, embedding tables, which dominate pCTR/pCVR model parameters, are hard to distill due to their massive scale.  
Existing KD methods typically focus on interaction networks, neglecting the embedding component that stores most of the model's knowledge \cite{zhu2020ensemble}.   
% However, the heterogeneity of architectures between teacher and student–different embedding dimensions or network topologies–makes layer-wise or feature-level distillation infeasible \cite{romero2014fitnets}.   
% Third, iterative distillation across billions of samples is computationally prohibitive in industrial systems \cite{ranganathan2025zero}.  
% To overcome these limitations, our work employs a dimension-adaptive, parameter-free embedding projection combined with progressive interaction network distillation, enabling efficient and architecture-agnostic knowledge transfer.

Online learning is widely used in large-scale user response prediction systems to handle continuously changing user behaviors, item pools, and contextual features \cite{mcmahan2013ad, he2014practical}.  
Unlike offline retraining, it updates models incrementally with streaming data for timely adaptation.  
Common approaches include incremental SGD or adaptive optimizers such as Adagrad and FTRL \cite{duchi2011adaptive, mcmahan2013ad}, as well as hybrid architectures that combine short-term and long-term user behavior modeling to balance stability and reactivity \cite{zhou2019deep, pi2019practice}.  
Experience replay mechanisms are employed to mitigate catastrophic forgetting in continual learning settings \cite{rolnick2019experience, chaudhry2019continual}.
A central challenge lies in detecting and adapting to distribution shifts.  
Statistical metrics such as KL divergence are often used to monitor data drift \cite{lipton2018detecting, rabanser2019failing}, while adaptive learning rates, continual learning, or regularization-based methods adjust model parameters accordingly \cite{kirkpatrick2017overcoming, zenke2017continual}.  
% Building on these ideas, our method introduces a divergence-aware adaptation mechanism that explicitly detects distributional shifts and performs adaptive knowledge transfer, achieving better stability–adaptability balance in non-stationary user response prediction environments.

\section{Key Insights and Conclusions}

We summarize the key design principles of this work.
\textbf{(1) Two-Stage Training Paradigm.}
The separation of offline and online phases is essential. Technically, freezing the teacher during offline distillation prevents it from overfitting sampled data while underfitting unsampled regions, which would create imbalanced knowledge undermining transfer quality. From a deployment perspective, offline training inevitably incurs delays, rendering the teacher outdated by completion. The online co-evolution phase enables both models to catch up with current data distributions for safe student deployment. 
\textbf{(2) Sample Organization and Curation.}
Sample diversity critically impacts distillation effectiveness by eliciting teacher knowledge more comprehensively, accelerating alignment, and improving robustness. Our framework adopts a unified design philosophy across both stages: offline class-balanced sampling increases information density, temporal diversity captures behavioral dynamics, and unclicked augmentation expands training space; online adaptive enhancement adjusts composition based on distribution shifts. These strategies optimize transfer efficiency through enriched sample informativeness and coverage.
\textbf{(3) Fast Embedding Inheritance.}
Dimension-adaptive projection addresses a fundamental bottleneck in embedding table training: since each batch activates only observed feature embeddings, random initialization requires prohibitive sample volumes to sufficiently train all embeddings. Our mathematical mapping directly inherits teacher embeddings, bypassing lengthy warm-up periods.

This paper addresses model switching costs in large-scale user response prediction through CrossAdapt, a two-stage knowledge transfer framework that decouples offline inheritance from online adaptation. The dimension-adaptive projection enables rapid, near-lossless embedding table transfer, bypassing iterative training on massive parameters. The asymmetric co-distillation mechanism dynamically balances stability and adaptability by monitoring distribution shifts. Extensive experiments demonstrate 43-71\% training time reduction with consistent performance gains, while industrial deployment on Tencent WeChat Channels validates practical effectiveness at scale. CrossAdapt provides a principled solution for deploying advanced architectures, enabling practitioners to leverage architectural innovations without prohibitive switching costs.

% While our unclicked sample augmentation demonstrates clear benefits for pCVR training, the current implementation uses a fixed sampling ratio and simple random selection. Future work could explore more sophisticated strategies, including dynamically adjusting the clicked-to-unclicked ratio based on teacher confidence, designing uncertainty-based or diversity-driven sampling policies for unclicked samples, and incorporating pseudo-label quality assessment to weight samples differentially during training.

\bibliographystyle{ACM-Reference-Format}
\bibliography{ref}

@article{abdi2010principal,
  title={Principal component analysis},
  author={Abdi, Herv{\'e} and Williams, Lynne J},
  journal={Wiley interdisciplinary reviews: computational statistics},
  volume={2},
  number={4},
  pages={433--459},
  year={2010},
  publisher={Wiley Online Library}
}

@inproceedings{cheng2016wide,
  title={Wide \& deep learning for recommender systems},
  author={Cheng, Heng-Tze and Koc, Levent and Harmsen, Jeremiah and Shaked, Tal and Chandra, Tushar and Aradhye, Hrishi and Anderson, Glen and Corrado, Greg and Chai, Wei and Ispir, Mustafa and others},
  booktitle={Proceedings of the 1st workshop on deep learning for recommender systems},
  pages={7--10},
  year={2016}
}

@article{fawcett2006introduction,
  title={An introduction to ROC analysis},
  author={Fawcett, Tom},
  journal={Pattern recognition letters},
  volume={27},
  number={8},
  pages={861--874},
  year={2006},
  publisher={Elsevier}
}

@inproceedings{he2014practical,
  title={Practical lessons from predicting clicks on ads at facebook},
  author={He, Xinran and Pan, Junfeng and Jin, Ou and Xu, Tianbing and Liu, Bo and Xu, Tao and Shi, Yanxin and Atallah, Antoine and Herbrich, Ralf and Bowers, Stuart and others},
  booktitle={Proceedings of the eighth international workshop on data mining for online advertising},
  pages={1--9},
  year={2014}
}

@article{hinton2015distilling,
  title={Distilling the knowledge in a neural network},
  author={Hinton, Geoffrey and Vinyals, Oriol and Dean, Jeff},
  journal={arXiv preprint arXiv:1503.02531},
  year={2015}
}

@article{kirkpatrick2017overcoming,
  title={Overcoming catastrophic forgetting in neural networks},
  author={Kirkpatrick, James and Pascanu, Razvan and Rabinowitz, Neil and Veness, Joel and Desjardins, Guillaume and Rusu, Andrei A and Milan, Kieran and Quan, John and Ramalho, Tiago and Grabska-Barwinska, Agnieszka and others},
  journal={Proceedings of the national academy of sciences},
  volume={114},
  number={13},
  pages={3521--3526},
  year={2017},
  publisher={National Academy of Sciences}
}

@inproceedings{mcmahan2013ad,
  title={Ad click prediction: a view from the trenches},
  author={McMahan, H Brendan and Holt, Gary and Sculley, David and Young, Michael and Ebner, Dietmar and Grady, Julian and Nie, Lan and Phillips, Todd and Davydov, Eugene and Golovin, Daniel and others},
  booktitle={Proceedings of the 19th ACM SIGKDD international conference on Knowledge discovery and data mining},
  pages={1222--1230},
  year={2013}
}

@inproceedings{park2019relational,
  title={Relational knowledge distillation},
  author={Park, Wonpyo and Kim, Dongju and Lu, Yan and Cho, Minsu},
  booktitle={Proceedings of the IEEE/CVF Conference on Computer Vision and Pattern Recognition},
  pages={3967--3976},
  year={2019}
}

@inproceedings{romero2014fitnets,
  title={Fitnets: Hints for thin deep nets},
  author={Romero, Adriana and Ballas, Nicolas and Kahou, Samira Ebrahimi and Chassang, Antoine and Gatta, Carlo and Bengio, Yoshua},
  booktitle={International Conference on Learning Representations},
  year={2015}
}

@inproceedings{song2019autoint,
  title={Autoint: Automatic feature interaction learning via self-attentive neural networks},
  author={Song, Weiping and Shi, Chence and Xiao, Zhiping and Duan, Zhijian and Xu, Yewen and Zhang, Ming and Tang, Jian},
  booktitle={Proceedings of the 28th ACM international conference on information and knowledge management},
  pages={1161--1170},
  year={2019}
}

@incollection{wang2017deep,
  title={Deep \& cross network for ad click predictions},
  author={Wang, Ruoxi and Fu, Bin and Fu, Gang and Wang, Mingliang},
  booktitle={Proceedings of the ADKDD'17},
  pages={1--7},
  year={2017}
}

@article{wang2020practical,
  title={A practical incremental method to train deep ctr models},
  author={Wang, Yichao and Guo, Huifeng and Tang, Ruiming and Liu, Zhirong and He, Xiuqiang},
  journal={arXiv preprint arXiv:2009.02147},
  year={2020}
}

@inproceedings{zhu2020ensembled,
  title={Ensembled CTR prediction via knowledge distillation},
  author={Zhu, Jieming and Liu, Jinyang and Li, Weiqi and Lai, Jincai and He, Xiuqiang and Chen, Liang and Zheng, Zibin},
  booktitle={Proceedings of the 29th ACM international conference on information \& knowledge management},
  pages={2941--2958},
  year={2020}
}

@inproceedings{lai2023adaembed,
  title={$\{$AdaEmbed$\}$: Adaptive embedding for $\{$Large-Scale$\}$ recommendation models},
  author={Lai, Fan and Zhang, Wei and Liu, Rui and Tsai, William and Wei, Xiaohan and Hu, Yuxi and Devkota, Sabin and Huang, Jianyu and Park, Jongsoo and Liu, Xing and others},
  booktitle={17th USENIX Symposium on Operating Systems Design and Implementation (OSDI 23)},
  pages={817--831},
  year={2023}
}

@article{liu2021learnable,
  title={Learnable embedding sizes for recommender systems},
  author={Liu, Siyi and Gao, Chen and Chen, Yihong and Jin, Depeng and Li, Yong},
  booktitle={International Conference on Learning Representations},
  year={2021}
}

@inproceedings{zhou2018deep,
author = {Zhou, Guorui and Zhu, Xiaoqiang and Song, Chenru and Fan, Ying and Zhu, Han and Ma, Xiao and Yan, Yanghui and Jin, Junqi and Li, Han and Gai, Kun},
title = {Deep Interest Network for Click-Through Rate Prediction},
year = {2018},
booktitle = {Proceedings of the 24th ACM SIGKDD International Conference on Knowledge Discovery \& Data Mining},
pages = {1059--1068},
numpages = {10},
}

@article{zhang2021deep,
author = {Zhang, Shuai and Yao, Lina and Sun, Aixin and Tay, Yi},
title = {Deep Learning Based Recommender System: A Survey and New Perspectives},
year = {2019},
volume = {52},
number = {1},
journal = {ACM Comput. Surv.},
month = feb,
articleno = {5},
numpages = {38}
}

@article{guo2017deepfm,
  title={DeepFM: a factorization-machine based neural network for CTR prediction},
  author={Guo, Huifeng and Tang, Ruiming and Ye, Yunming and Li, Zhenguo and He, Xiuqiang},
  year = {2017},
  booktitle = {Proceedings of the 26th International Joint Conference on Artificial Intelligence},
  pages = {1725--1731},
  numpages = {7},
  series = {IJCAI'17}
}

@inproceedings{zhang2020retrain,
  title={How to retrain recommender system? A sequential meta-learning method},
  author={Zhang, Yang and Feng, Fuli and Wang, Chenxu and He, Xiangnan and Wang, Meng and Li, Yan and Zhang, Yongdong},
  booktitle={Proceedings of the 43rd International ACM SIGIR Conference on Research and Development in Information Retrieval},
  pages={1479--1488},
  year={2020}
}

@article{ba2014deep,
  title={Do deep nets really need to be deep?},
  author={Ba, Jimmy and Caruana, Rich},
  journal={Advances in neural information processing systems},
  volume={27},
  year={2014}
}

@article{zagoruyko2016paying,
  title={Paying more attention to attention: Improving the performance of convolutional neural networks via attention transfer},
  author={Zagoruyko, Sergey and Komodakis, Nikos},
  booktitle={International Conference on Learning Representations},
  year={2017}
}

@article{pan2009survey,
  title={A survey on transfer learning},
  author={Pan, Sinno Jialin and Yang, Qiang},
  journal={IEEE Transactions on knowledge and data engineering},
  volume={22},
  number={10},
  pages={1345--1359},
  year={2009},
  publisher={IEEE}
}

@inproceedings{tang2018ranking,
  title={Ranking distillation: Learning compact ranking models with high performance for recommender system},
  author={Tang, Jiaxi and Wang, Ke},
  booktitle={Proceedings of the 24th ACM SIGKDD international conference on knowledge discovery \& data mining},
  pages={2289--2298},
  year={2018}
}

@inproceedings{zhu2020ensemble,
author = {Zhu, Jieming and Liu, Jinyang and Li, Weiqi and Lai, Jincai and He, Xiuqiang and Chen, Liang and Zheng, Zibin},
title = {Ensembled CTR Prediction via Knowledge Distillation},
year = {2020},
booktitle = {Proceedings of the 29th ACM International Conference on Information \& Knowledge Management},
pages = {2941--2958},
numpages = {18},
series = {CIKM '20}
}

@article{duchi2011adaptive,
  title={Adaptive subgradient methods for online learning and stochastic optimization},
  author={Duchi, John and Hazan, Elad and Singer, Yoram},
  journal={Journal of machine learning research},
  volume={12},
  number={7},
  year={2011}
}

@inproceedings{zhou2019deep,
  title={Deep interest evolution network for click-through rate prediction},
  author={Zhou, Guorui and Mou, Na and Fan, Ying and Pi, Qi and Bian, Weijie and Zhou, Chang and Zhu, Xiaoqiang and Gai, Kun},
  booktitle={Proceedings of the AAAI conference on artificial intelligence},
  volume={33},
  number={01},
  pages={5941--5948},
  year={2019}
}

@inproceedings{pi2019practice,
  title={Practice on long sequential user behavior modeling for click-through rate prediction},
  author={Pi, Qi and Bian, Weijie and Zhou, Guorui and Zhu, Xiaoqiang and Gai, Kun},
  booktitle={Proceedings of the 25th ACM SIGKDD international conference on knowledge discovery \& data mining},
  pages={2671--2679},
  year={2019}
}

@article{rolnick2019experience,
  title={Experience replay for continual learning},
  author={Rolnick, David and Ahuja, Arun and Schwarz, Jonathan and Lillicrap, Timothy and Wayne, Gregory},
  journal={Advances in neural information processing systems},
  volume={32},
  year={2019}
}

@inproceedings{chaudhry2019continual,
  title={Continual learning with tiny episodic memories},
  author={Chaudhry, Arslan and Rohrbach, Marcus and Elhoseiny, Mohamed and Ajanthan, Thalaiyasingam and Dokania, P and Torr, P and Ranzato, M},
  booktitle={Workshop on Multi-Task and Lifelong Reinforcement Learning},
  year={2019}
}

@inproceedings{lipton2018detecting,
  title={Detecting and correcting for label shift with black box predictors},
  author={Lipton, Zachary and Wang, Yu-Xiang and Smola, Alexander},
  booktitle={International conference on machine learning},
  pages={3122--3130},
  year={2018},
  organization={PMLR}
}

@article{rabanser2019failing,
  title={Failing loudly: An empirical study of methods for detecting dataset shift},
  author={Rabanser, Stephan and G{\"u}nnemann, Stephan and Lipton, Zachary},
  journal={Advances in Neural Information Processing Systems},
  volume={32},
  year={2019}
}

@inproceedings{zenke2017continual,
  title={Continual learning through synaptic intelligence},
  author={Zenke, Friedemann and Poole, Ben and Ganguli, Surya},
  booktitle={International conference on machine learning},
  pages={3987--3995},
  year={2017},
  organization={PMLR}
}

@inproceedings{liu2020general,
  title={A general knowledge distillation framework for counterfactual recommendation via uniform data},
  author={Liu, Dugang and Cheng, Pengxiang and Dong, Zhenhua and He, Xiuqiang and Pan, Weike and Ming, Zhong},
  booktitle={Proceedings of the 43rd international ACM SIGIR conference on research and development in information retrieval},
  pages={831--840},
  year={2020}
}

@inproceedings{kang2021topology,
  title={Topology distillation for recommender system},
  author={Kang, SeongKu and Hwang, Junyoung and Kweon, Wonbin and Yu, Hwanjo},
  booktitle={Proceedings of the 27th ACM SIGKDD Conference on Knowledge Discovery \& Data Mining},
  pages={829--839},
  year={2021}
}

@article{naumov2019deep,
  title={Deep learning recommendation model for personalization and recommendation systems},
  author={Naumov, Maxim and Mudigere, Dheevatsa and Shi, Hao-Jun Michael and Huang, Jianyu and Sundaraman, Narayanan and Park, Jongsoo and Wang, Xiaodong and Gupta, Udit and Wu, Carole-Jean and Azzolini, Alisson G and others},
  journal={arXiv preprint arXiv:1906.00091},
  year={2019}
}

@article{mikolov2013distributed,
  title={Distributed representations of words and phrases and their compositionality},
  author={Mikolov, Tomas and Sutskever, Ilya and Chen, Kai and Corrado, Greg S and Dean, Jeff},
  journal={Advances in neural information processing systems},
  volume={26},
  year={2013}
}

@article{gander1980algorithms,
  title={Algorithms for the QR decomposition},
  author={Gander, Walter},
  journal={Res. Rep},
  volume={80},
  number={02},
  pages={1251--1268},
  year={1980}
}

@article{howard2018universal,
  title={Universal language model fine-tuning for text classification},
  author={Howard, Jeremy and Ruder, Sebastian},
  journal={arXiv preprint arXiv:1801.06146},
  year={2018}
}

@article{chawla2002smote,
  title={SMOTE: synthetic minority over-sampling technique},
  author={Chawla, Nitesh V and Bowyer, Kevin W and Hall, Lawrence O and Kegelmeyer, W Philip},
  journal={Journal of artificial intelligence research},
  volume={16},
  pages={321--357},
  year={2002}
}

@inproceedings{koren2009collaborative,
  title={Collaborative filtering with temporal dynamics},
  author={Koren, Yehuda},
  booktitle={Proceedings of the 15th ACM SIGKDD international conference on Knowledge discovery and data mining},
  pages={447--456},
  year={2009}
}

@article{xiao2017attentional,
  title={Attentional factorization machines: Learning the weight of feature interactions via attention networks},
  author={Xiao, Jun and Ye, Hao and He, Xiangnan and Zhang, Hanwang and Wu, Fei and Chua, Tat-Seng},
  journal={arXiv preprint arXiv:1708.04617},
  year={2017}
}

@inproceedings{huang2019fibinet,
  title={FiBiNET: combining feature importance and bilinear feature interaction for click-through rate prediction},
  author={Huang, Tongwen and Zhang, Zhiqi and Zhang, Junlin},
  booktitle={Proceedings of the 13th ACM conference on recommender systems},
  pages={169--177},
  year={2019}
}

@inproceedings{covington2016deep,
  title={Deep neural networks for youtube recommendations},
  author={Covington, Paul and Adams, Jay and Sargin, Emre},
  booktitle={Proceedings of the 10th ACM conference on recommender systems},
  pages={191--198},
  year={2016}
}

\clearpage
\appendix

\section{Theoretical Proof (Sec.~\ref{sec:embedding_projection})}
\label{app:proof}

\textbf{Theorem~\ref{thm:embedding_projection}} (Optimal Embedding Projections).
Let $\bar{E}_T = E_T - \mathbf{1}_V \mu^\top$ be the centered teacher embedding matrix and $C = \frac{1}{V}\bar{E}_T^\top \bar{E}_T$ be its covariance matrix with eigendecomposition $C = U\Lambda U^\top$ where $\Lambda = \mathrm{diag}(\lambda_1, \ldots, \lambda_{d_T})$ and $\lambda_1 \geq \cdots \geq \lambda_{d_T} \geq 0$. Then:
\textit{(i)} When $d_S = d_T$, direct copying preserves all pairwise inner products exactly.
\textit{(ii)} When $d_S > d_T$, any orthogonal expansion $W$ satisfying $WW^\top = I_{d_T}$ preserves all pairwise inner products exactly.
\textit{(iii)} When $d_S < d_T$ with $d_S \leq \mathrm{rank}(\bar{E}_T)$, the PCA projection $W = U[:, 1:d_S]$ achieves the minimum inner product distortion over all rank-$d_S$ projections, with Gram matrix error $\|G_T - G_S\|_F^2 = V^2 \sum_{k=d_S+1}^{d_T} \lambda_k^2$, where $G_T = \bar{E}_T \bar{E}_T^\top$ and $G_S = \bar{E}_T W W^\top \bar{E}_T^\top$.

\begin{proof}
We analyze each case with detailed derivations, focusing on the preservation of inner product relationships.

\paragraph{Case 1: Dimension preservation} 
When $d_S = d_T$ and $E_S = E_T$, for any categorical feature items $i, j \in [V]$, we have $\langle E_S[i,:], E_S[j,:] \rangle = \langle E_T[i,:], E_T[j,:] \rangle$, immediately preserving all pairwise inner products.

\paragraph{Case 2: Dimension expansion}
Consider an orthogonal matrix $Q \in \mathbb{R}^{d_S \times d_S}$ satisfying $Q^\top Q = I_{d_S}$. Extracting the first $d_T$ columns forms $A = Q[:, 1:d_T] \in \mathbb{R}^{d_S \times d_T}$ with orthonormal columns, yielding $A^\top A = I_{d_T}$. Defining $W = A^\top \in \mathbb{R}^{d_T \times d_S}$, we obtain $WW^\top = A^\top A = I_{d_T}$.

For the expanded embeddings $E_S = E_T W \in \mathbb{R}^{V \times d_S}$, the inner product between any two categorical feature items is:
\begin{equation}
\begin{aligned}
\langle E_S[i,:], E_S[j,:] \rangle &= (E_T[i,:] W)(E_T[j,:] W)^\top = E_T[i,:] W W^\top E_T[j,:]^\top \\
&= E_T[i,:] I_{d_T} E_T[j,:]^\top = \langle E_T[i,:], E_T[j,:] \rangle
\end{aligned}
\end{equation}
Thus, embedding expansion through orthogonal projection preserves all pairwise inner products exactly.

\paragraph{Case 3: Dimension reduction}
Let $r = \mathrm{rank}(\bar{E}_T) \leq \min\{V, d_T\}$ be the rank of the centered embedding matrix, and assume $d_S \leq r$ to ensure meaningful dimensionality reduction.

\textbf{SVD decomposition.} The thin SVD of $\bar{E}_T \in \mathbb{R}^{V \times d_T}$ is $\bar{E}_T = \widetilde{U} S R^\top$, where $\widetilde{U} \in \mathbb{R}^{V \times r}$ has orthonormal columns (left singular vectors), $S = \mathrm{diag}(\sigma_1, \ldots, \sigma_r) \in \mathbb{R}^{r \times r}$ contains singular values $\sigma_1 \geq \cdots \geq \sigma_r > 0$, and $R \in \mathbb{R}^{d_T \times r}$ has orthonormal columns (right singular vectors).

\textbf{Eigenvalue-singular value connection.} The empirical covariance matrix is
\begin{equation}
C = \frac{1}{V}\bar{E}_T^\top \bar{E}_T = \frac{1}{V} R S^\top \widetilde{U}^\top \widetilde{U} S R^\top = \frac{1}{V} R S^2 R^\top
\end{equation}
revealing that $\lambda_k = \sigma_k^2/V$ for $k = 1, \ldots, r$ and $\lambda_k = 0$ for $k = r+1, \ldots, d_T$, with eigenvectors $U = [R, U_{\perp}]$ where $U_{\perp}$ spans the null space of $C$.

\textbf{Optimality of PCA projection.} 
By the Eckart-Young-Mirsky theorem, the best rank-$d_S$ approximation to $\bar{E}_T$ in Frobenius norm keeps the top $d_S$ singular values, yielding reconstruction error
\begin{equation}
\|\bar{E}_T - \bar{E}_T WW^\top\|_F^2 = \sum_{k=d_S+1}^r \sigma_k^2 = V \sum_{k=d_S+1}^{d_T} \lambda_k
\end{equation}
minimized when $W = U[:, 1:d_S]$, the matrix of top $d_S$ eigenvectors of $C$. This projection also maximizes the retained variance $\mathrm{tr}(W^\top C W) = \sum_{k=1}^{d_S} \lambda_k$, with the fraction of variance retained being $\sum_{k=1}^{d_S} \lambda_k / \sum_{k=1}^{d_T} \lambda_k$, providing a practical metric for choosing $d_S$.

\textbf{Gram matrix distortion analysis.} The Gram matrices $G_T = \bar{E}_T \bar{E}_T^\top \in \mathbb{R}^{V \times V}$ and $G_S = \bar{E}_T WW^\top \bar{E}_T^\top \in \mathbb{R}^{V \times V}$ encode all pairwise inner products. The distortion matrix is $G_T - G_S = \bar{E}_T (I_{d_T} - WW^\top) \bar{E}_T^\top$. Computing the squared Frobenius norm:
\begin{equation}
\begin{aligned}
\|G_T - G_S\|_F^2 &= \mathrm{tr}((G_T - G_S)^2) \\
&= \mathrm{tr}(\bar{E}_T (I - WW^\top) \bar{E}_T^\top \bar{E}_T (I - WW^\top) \bar{E}_T^\top) \\
&= \mathrm{tr}( (I - WW^\top) (\bar{E}_T^\top \bar{E}_T)
      (I - WW^\top) (\bar{E}_T^\top \bar{E}_T) ) \\
&= V \cdot \mathrm{tr}((I - WW^\top) VC (I - WW^\top) C) \\
&= V^2 \cdot \mathrm{tr}((I - WW^\top) C^2 (I - WW^\top))
\end{aligned}
\end{equation}
using the cyclicity of trace.

\textbf{Spectral analysis.} Since $W = U[:, 1:d_S]$ and $U$ contains the eigenvectors of $C$, the complementary projector satisfies $I - WW^\top = \sum_{k=d_S+1}^{d_T} u_k u_k^\top$. Therefore,
\begin{equation}
(I - WW^\top) C = \sum_{k=d_S+1}^{d_T} \lambda_k u_k u_k^\top
\end{equation}
by orthonormality. Computing the product yields
\begin{equation}
(I - WW^\top) C (I - WW^\top) C = \sum_{k=d_S+1}^{d_T} \lambda_k^2 u_k u_k^\top
\end{equation}
with trace $\mathrm{tr}((I - WW^\top) C^2 (I - WW^\top)) = \sum_{k=d_S+1}^{d_T} \lambda_k^2$. Therefore, the Gram matrix distortion is
\begin{equation}
\|G_T - G_S\|_F^2 = V^2 \sum_{k=d_S+1}^{d_T} \lambda_k^2
\end{equation}

This result enables principled compression of embedding tables while preserving semantic relationships. The distortion depends only on the discarded eigenvalues, allowing practitioners to choose $d_S$ by setting an acceptable distortion threshold or retained variance ratio. The quadratic dependence on eigenvalues means that if the spectrum decays rapidly—as often occurs when categorical features exhibit strong correlation structure (e.g., $\lambda_k \sim k^{-\alpha}$ with $\alpha > 1$)—excellent compression is achievable with minimal distortion to the semantic relationships captured by inner products.
\end{proof}

\section{Offline Evaluation Implementation Details (Sec.~\ref{sec:exp_setup})}
\label{app:implementation_detail}

\paragraph{Dataset Descriptions.}
We use three widely adopted CTR prediction benchmarks.  
\textit{Criteo}\footnote{\href{https://www.kaggle.com/c/criteo-display-ad-challenge}{https://www.kaggle.com/c/criteo-display-ad-challenge}} contains 45 million display advertising samples collected over one week, with 26 categorical and 13 numerical feature fields.  
\textit{Avazu}\footnote{\href{https://www.kaggle.com/c/avazu-ctr-prediction}{https://www.kaggle.com/c/avazu-ctr-prediction}} consists of 10 days of mobile ad click logs with 22 categorical features, such as app category and device ID.  
\textit{Criteo1T}\footnote{\href{https://huggingface.co/datasets/criteo/CriteoClickLogs}{https://huggingface.co/datasets/criteo/CriteoClickLogs}} is a large-scale variant constructed from 24 days of Criteo data by sampling 3\% per day, forming 103 million samples. It captures long-term temporal dynamics and tests model scalability.

Table~\ref{tab:dataset_statistics} summarizes the key statistics and data split ratios of all datasets.

\begin{table}[t]
\centering
\caption{Dataset statistics. \#Samples denotes the total number of samples. CTR represents the positive rate (click-through rate). NF/CF indicate the number of numerical and categorical features, respectively. Data Split Ratio represents $\mathcal{D}^{\text{hist}}:\mathcal{D}^{\text{train}}:\mathcal{D}^{\text{online}}:\mathcal{D}^{\text{test}}$.}
\vspace{-1em}
\label{tab:dataset_statistics}
\small
\resizebox{\linewidth}{!}{
\begin{tabular}{l|ccccc}
\hline
\rowcolor{blue!15}
\textbf{Dataset} & \textbf{\#Samples} & \textbf{CTR (\%)} & \textbf{\#NF} & \textbf{\#CF} &\textbf{Data Split Ratio} \\
\hline
Criteo & 45,840,617 & 25.6 & 13 & 26 & 4:4:1:1\\
Avazu & 40,428,967 & 17.0 & 0 & 22 & 4:4:1:1\\
Criteo1T & 103,983,943 & 3.3 & 13 & 26 & 10:8:1:1\\
% \hline
% Industrial  & >1B & - & - & - & 5y:6m:5d:5d\\
\hline
\end{tabular}
}
\vspace{-1em}
\end{table}

For the public datasets, we follow the preprocessing pipeline in~\cite{song2019autoint}. Specifically, features with occurrence frequencies below predefined thresholds are replaced with a default ``<UNK>'' token, where the thresholds are set to 10 for Criteo, 5 for Avazu and 30 for Criteo1T, respectively. For numerical features in Criteo and Criteo1T, we apply log-transformation to mitigate large variance by transforming each value $x$ to $\log_2(x)$ if $x > 2$.

\paragraph{Model Configurations.}
To examine CrossAdapt under architectural heterogeneity, we design two teacher-student settings. The teacher model is a Multi-Layer Perceptron (MLP)~\cite{covington2016deep} with an embedding dimension of 8 and hidden layers of [64, 32, 4] for binary CTR prediction.  
The student models include an \textit{architecture-matched} setting using the same MLP as a homogeneous baseline, and an \textit{architecture-mismatched} setting adopting FiBiNET~\cite{huang2019fibinet} with an embedding dimension of 16, SENET and bilinear-interaction layers, and hidden layers of [256, 128, 16]. This heterogeneous setup introduces clear differences in embedding size, interaction mechanisms, and capacity, offering a strong test for cross-architecture transfer.

To further assess robustness, we also experiment with several representative CTR models as teachers or students, including MLP, AFM~\cite{xiao2017attentional}, DCN~\cite{wang2017deep}, AutoInt~\cite{song2019autoint}, and FiBiNET, covering diverse interaction paradigms and complexities.

\paragraph{Hyperparameter Settings.} 
For knowledge distillation, we set the temperature $\tau = 4.0$ and the loss balance coefficient $\lambda = 0.7$ to balance the distillation loss and task-specific loss. During offline sampling, 10\% of accessible historical data is selected with a class-balanced ratio ($r_\text{pos} = r_\text{neg} = 0.5$). In the online adaptation phase, the student model is updated at every step, while the teacher model is updated every 10 steps. For distribution shift detection, we employ 10 equal-sized consecutive windows with 50 histogram bins each, using Jensen-Shannon (JS) divergence as the metric with thresholds $\theta_\text{low} = 0.01$ and $\theta_\text{high} = 0.05$. Historical augmentation incorporates up to 10\% of samples from stable periods ($r_\text{enh} = 0.1$). All hyperparameters remain consistent across experiments unless explicitly stated otherwise.

We employ the Adam optimizer with learning rates of 0.1 and 0.001 for the embedding tables and interaction networks, respectively. The training batch size is set to 4096. Each experiment is repeated five times with different random seeds, and we report the mean and standard deviation of the results.

\paragraph{Experimental Environment.} 
All offline evaluation experiments are conducted on a workstation equipped with an NVIDIA RTX 4070Ti SUPER GPU (16GB VRAM), an Intel Core i5-12600KF CPU (10 cores @ 3.7 GHz), and Ubuntu 22.04 LTS. The software environment comprises PyTorch 2.6.0, Python 3.10, and CUDA 12.4.

\begin{figure}[t]
\centering
\begin{subfigure}[b]{\linewidth}
    \centering
    \includegraphics[width=0.9\linewidth]{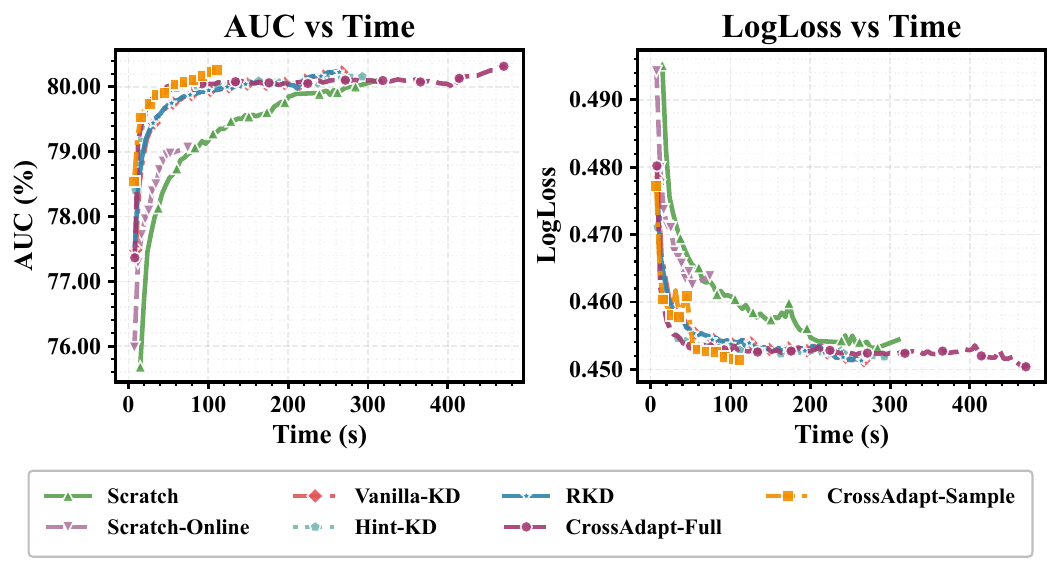}
    \caption{Criteo}
    \label{fig:time_criteo}
\end{subfigure}

\begin{subfigure}[b]{\linewidth}
    \centering
    \includegraphics[width=0.9\linewidth]{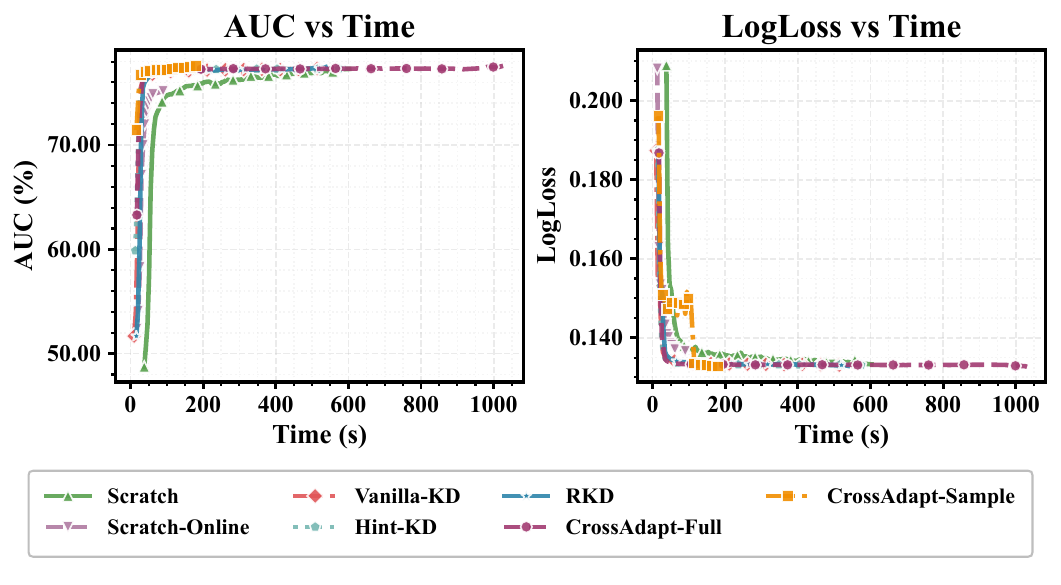}
    \caption{Criteo1T}
    \label{fig:time_criteo1T}
\end{subfigure}
\vspace{-2em}
\caption{Online training convergence curves on Avazu and Criteo1T. CrossAdapt achieves faster convergence compared to baselines.}
\label{fig:time_criteo2}
\vspace{-1em}
\end{figure}

\section{Training Efficiency and Convergence Analysis (Sec.~\ref{sec:rq1})}
\label{app:convergence_analysis}
Figure~\ref{fig:time_criteo} illustrates the convergence behavior on Criteo. CrossAdapt reaches stable performance ($\geq$80.2\% AUC) within 100 seconds, whereas Scratch requires over 200 seconds—representing a 50\% reduction in convergence time. Similar acceleration is observed on Criteo1T (Figure~\ref{fig:time_criteo1T}), where CrossAdapt converges in 150 seconds compared to Scratch’s 400 seconds (62\% reduction). Traditional KD methods (Vanilla-KD, Hint-KD, RKD) achieve intermediate speeds, outperforming Scratch but still lagging behind CrossAdapt. These results highlight the efficiency of CrossAdapt, whose strategic distillation sample selection module enables notably faster convergence.

\begin{table}[t]
\centering
\caption{Architectural robustness analysis on Criteo dataset. Comparison of teacher model performance, scratch training, and CrossAdapt-Full across 25 architecture transfer scenarios.}
\vspace{-1em}
\label{tab:architecture_generalization}
\small
\resizebox{\linewidth}{!}{
\begin{tabular}{lcccc}
\hline
\rowcolor{blue!15}
$\mathcal{M}_T \rightarrow \mathcal{M}_S$ & \textbf{Teacher} & \textbf{Scratch} & \textbf{CrossAdapt} &$\Delta$ \\
\hline
MLP $\rightarrow$ MLP & 80.08±0.02 & 79.90±0.03 & \textbf{80.29±0.01} & \textcolor{blue}{+0.39}\\
MLP $\rightarrow$ AFM & 80.08±0.02 & 79.55±0.04 & \textbf{80.21±0.02} & \textcolor{blue}{+0.66}\\
MLP $\rightarrow$ DCN & 80.08±0.02 & 79.92±0.02 & \textbf{80.31±0.01} & \textcolor{blue}{+0.39}\\
MLP $\rightarrow$ AutoInt & 80.08±0.02 & 80.08±0.01 & \textbf{80.33±0.01} & \textcolor{blue}{+0.25}\\
MLP $\rightarrow$ FiBiNET & 80.08±0.02 & 80.08±0.01 & \textbf{80.33±0.01} & \textcolor{blue}{+0.25}\\
\hline
AFM $\rightarrow$ MLP & 79.53±0.17 & 79.90±0.03 & \textbf{80.08±0.07} & \textcolor{blue}{+0.18}\\
AFM $\rightarrow$ AFM & 79.53±0.17 & 79.55±0.04 & \textbf{79.98±0.07} & \textcolor{blue}{+0.43}\\
AFM $\rightarrow$ DCN & 79.53±0.17 & 79.92±0.02 & \textbf{80.06±0.10} & \textcolor{blue}{+0.14}\\
AFM $\rightarrow$ AutoInt & 79.53±0.17 & 80.08±0.01 & \textbf{80.09±0.07} & \textcolor{blue}{+0.01}\\
AFM $\rightarrow$ FiBiNET & 79.53±0.17 & 80.08±0.01 & \textbf{80.10±0.07} & \textcolor{blue}{+0.02}\\
\hline
DCN $\rightarrow$ MLP & 80.01±0.00 & 79.90±0.03 & \textbf{80.29±0.02} & \textcolor{blue}{+0.39}\\
DCN $\rightarrow$ AFM & 80.01±0.00 & 79.55±0.04 & \textbf{80.21±0.02} & \textcolor{blue}{+0.66}\\
DCN $\rightarrow$ DCN & 80.01±0.00 & 79.92±0.02 & \textbf{80.26±0.03} & \textcolor{blue}{+0.34}\\
DCN $\rightarrow$ AutoInt & 80.01±0.00 & 80.08±0.01 & \textbf{80.31±0.01} & \textcolor{blue}{+0.23}\\
DCN $\rightarrow$ FiBiNET & 80.01±0.00 & 80.08±0.01 & \textbf{80.31±0.01} & \textcolor{blue}{+0.23}\\
\hline
AutoInt $\rightarrow$ MLP & 80.05±0.02 & 79.90±0.03 & \textbf{80.34±0.01} & \textcolor{blue}{+0.44}\\
AutoInt $\rightarrow$ AFM & 80.05±0.02 & 79.55±0.04 & \textbf{79.99±0.17} & \textcolor{blue}{+0.44}\\
AutoInt $\rightarrow$ DCN & 80.05±0.02 & 79.92±0.02 & \textbf{80.33±0.01} & \textcolor{blue}{+0.41}\\
AutoInt $\rightarrow$ AutoInt & 80.05±0.02 & 80.08±0.01 & \textbf{80.34±0.01} & \textcolor{blue}{+0.26}\\
AutoInt $\rightarrow$ FiBiNET & 80.05±0.02 & 80.08±0.01 & \textbf{80.38±0.01} & \textcolor{blue}{+0.30}\\
\hline
FiBiNET $\rightarrow$ MLP & 80.19±0.02 & 79.90±0.03 & \textbf{80.41±0.02} & \textcolor{blue}{+0.51}\\
FiBiNET $\rightarrow$ AFM & 80.19±0.02 & 79.55±0.04 & \textbf{80.30±0.02} & \textcolor{blue}{+0.75}\\
FiBiNET $\rightarrow$ DCN & 80.19±0.02 & 79.92±0.02 & \textbf{80.42±0.01} & \textcolor{blue}{+0.50}\\
FiBiNET $\rightarrow$ AutoInt & 80.19±0.02 & 80.08±0.01 & \textbf{80.43±0.01} & \textcolor{blue}{+0.35}\\
FiBiNET $\rightarrow$ FiBiNET & 80.19±0.02 & 80.08±0.01 & \textbf{80.45±0.01} & \textcolor{blue}{+0.37}\\
\hline
\end{tabular}
}
\vspace{-1em}
\end{table}

\section{Architectural Robustness (Sec.~\ref{sec:rq3})} 
\label{app:architecture_generalization}
Table~\ref{tab:architecture_generalization} shows results for 25 teacher-student combinations on Criteo. CrossAdapt consistently outperforms Scratch across all configurations, with +0.01–0.75\% AUC gains. The largest improvements occur when transferring to AFM students (+0.43–0.75\%), while FiBiNET— the strongest teacher—yields +0.35–0.75\% across students. Even AFM, the weakest teacher, provides meaningful gains (+0.01–0.43\%), demonstrating effective transfer despite a 0.66\% teacher capacity gap. Cross-architecture transfers match or exceed homogeneous transfers: for example, FiBiNET $\to$ MLP (+0.51\%) outperforms MLP $\to$ MLP (+0.39\%), and DCN $\leftrightarrow$ AutoInt shows effective bidirectional transfer. These results confirm that our proposed CrossAdapt effectively bridges heterogeneous architectures, from simple MLPs to attention-based models (AFM, AutoInt), cross networks (DCN), and bilinear interactions (FiBiNET).

\begin{figure}[tbp]
\centering

\begin{subfigure}[b]{\linewidth}
    \centering
    \includegraphics[width=\linewidth]{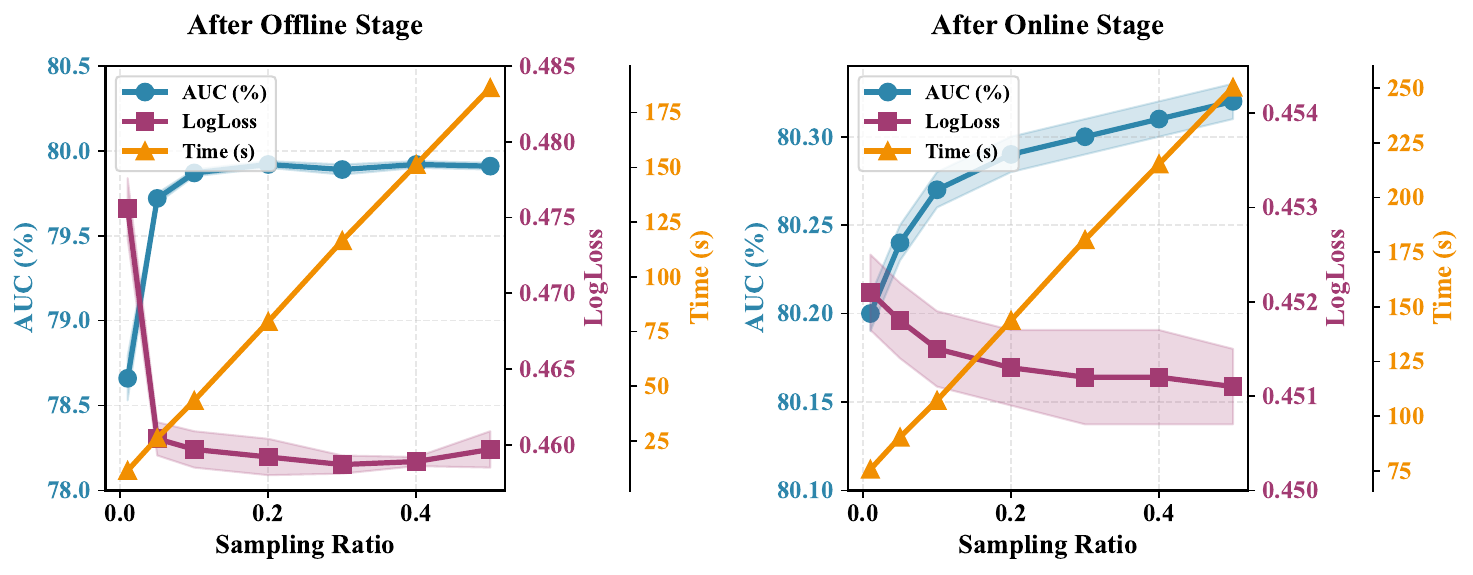}
    \caption{Sampling ratio $r$.}
    \label{fig:sampling_ratio}
\end{subfigure}

\begin{subfigure}[b]{\linewidth}
    \centering
    \includegraphics[width=\linewidth]{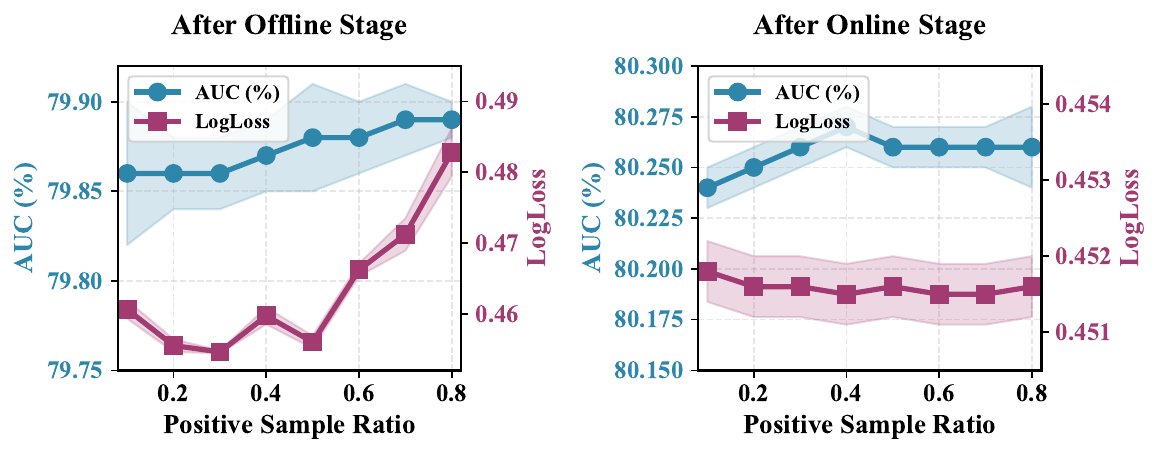}
    \caption{Positive sampling ratio $r_\text{pos}$.}
    \label{fig:positive_ratio}
\end{subfigure}

% \vspace{0.5em}

\begin{subfigure}[b]{0.495\linewidth}
    \centering
    \includegraphics[width=\linewidth]{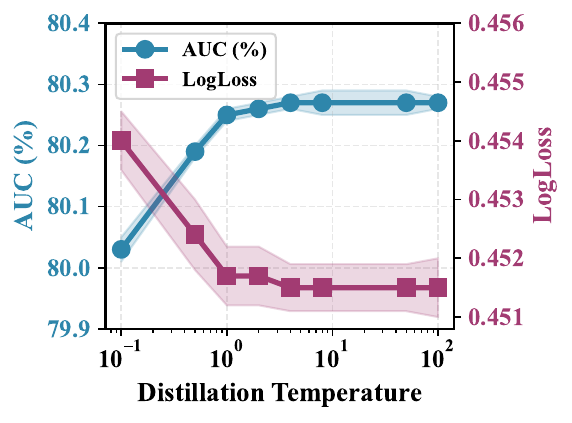}
    \caption{Distillation temperature $\tau$.}
    \label{fig:temperature}
\end{subfigure}
\hfill
\begin{subfigure}[b]{0.495\linewidth}
    \centering
    \includegraphics[width=\linewidth]{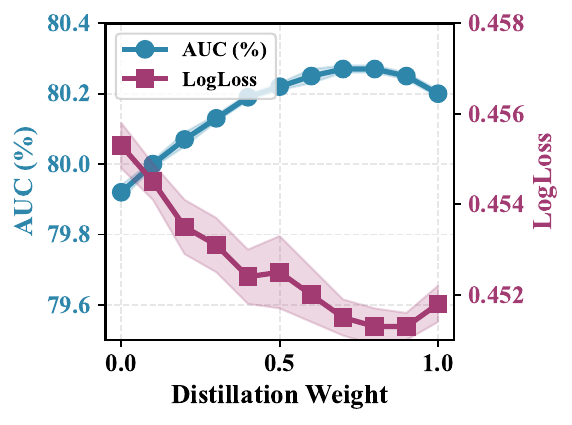}
    \caption{Distillation weight $\lambda$.}
    \label{fig:weight}
\end{subfigure}

\begin{subfigure}[b]{0.529\linewidth}
    \centering
    \includegraphics[width=\linewidth]{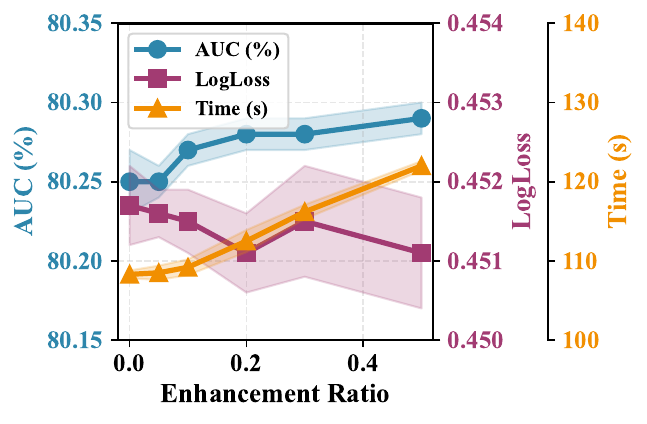}
    \caption{Enhancement ratio $r_\text{enh}$.}
    \label{fig:enhancement}
\end{subfigure}
\hfill
\begin{subfigure}[b]{0.462\linewidth}
    \centering
    \includegraphics[width=\linewidth]{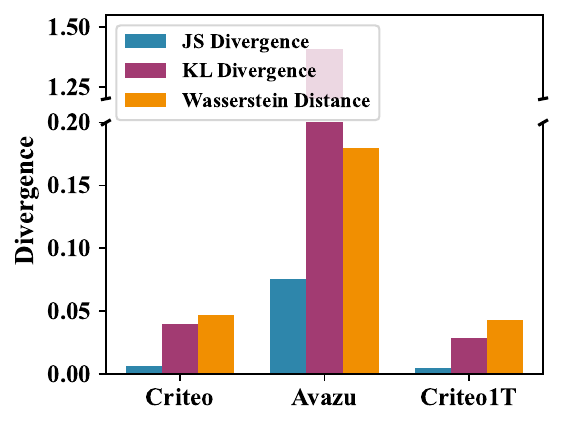}
    \caption{Distribution shift metrics $\mathbb{D}$.}
    \label{fig:divergence}
\end{subfigure}

\vspace{-1em}
\caption{Sensitivity analysis of key hyperparameters on Criteo dataset.}
\label{fig:hyperparameter_sensitivity}
\vspace{-1em}
\end{figure}

\section{Hyperparameter Analysis (Sec.~\ref{sec:rq3})}
\label{app:additional_hyperparameter}

We conduct comprehensive experiments to examine CrossAdapt's robustness across key hyperparameters. Figure~\ref{fig:hyperparameter_sensitivity} presents the performance variations with respect to the sampling ratio ($r$), positive sampling ratio ($r_\text{pos}$), distillation temperature ($\tau$), distillation balance coefficient ($\lambda$), historical augmentation ratio ($r_\text{enh}$), and distribution shift detection metrics. These results reveal both optimal operating points and the model's resilience to parameter perturbations across different experimental configurations.

\textbf{Impact of Sampling Ratio} (Figure~\ref{fig:sampling_ratio}). After the offline stage, AUC improves rapidly from 78.66\% ($r=0.01$) to 79.87\% ($r=0.1$), then plateaus at around 79.90\% for $r \geq 0.2$, while training time scales linearly from 11.4s to 186.4s. After online adaptation, the model achieves consistently strong performance with AUC ranging from 80.20\% to 80.32\% across all sampling ratios, demonstrating that CrossAdapt's online stage effectively compensates for lower offline sampling. 
We find that $r=0.1$ provides a good balance, achieving 80.27\% AUC with only 109.6s total time, while higher ratios yield only marginal gains ($\leq$0.05\% AUC) at substantially higher computational cost.

\textbf{Impact of Positive Sampling Ratio} (Figure~\ref{fig:positive_ratio}). 
During the offline stage, increasing the positive sampling ratio $r_\text{pos}$ from 0.1 to 0.8 yields a steady AUC improvement from 79.86\% to 79.89\%. However, LogLoss follows a non-monotonic trend—decreasing from 0.4606 ($r_\text{pos}=0.1$) to 0.4546 ($r_\text{pos}=0.3$) before rising to 0.4828 ($r_\text{pos}=0.8$)—indicating that excessively high positive ratios are suboptimal. After online adaptation, the influence of $r_\text{pos}$ diminishes, with all configurations converging to similar performance levels. The best results are obtained at $r_\text{pos}=0.4$, achieving the highest AUC (80.27\%) and the lowest LogLoss (0.4515). This suggests that an appropriate increase in positive sample ratio can boost distillation effectiveness, as positive samples carry more informative signals.

\textbf{Distillation Temperature Sensitivity} (Figure~\ref{fig:temperature}). 
The temperature parameter $\tau$ exhibits a clear plateau for $\tau \in [4, 100]$, consistently achieving the optimal AUC of 80.27\% and LogLoss of 0.4515. In contrast, lower temperatures degrade performance; for example, $\tau=0.1$ yields only 80.03\% AUC with a LogLoss of 0.4540. The stability within the optimal range $\tau \in [4, 100]$ highlights substantial operational flexibility, as the system remains insensitive to temperature selection in this interval.

\textbf{Distillation Balance Coefficient} (Figure~\ref{fig:weight}).
The distillation weight $\lambda$ exhibits a smooth performance curve peaking at $\lambda \in [0.7, 0.8]$, where both configurations achieve 80.27\% AUC with minimal LogLoss (0.4515 and 0.4513 respectively). Pure task loss ($\lambda=0$) yields substantially lower performance at 79.92\% AUC, demonstrating the critical importance of distillation knowledge. Performance steadily improves as $\lambda$ increases from 0.1 (80.00\% AUC) through the optimal range (80.27\% AUC), then slightly degrades at $\lambda=1$ (80.20\% AUC). The effective range $\lambda \in [0.6, 0.9]$ maintains AUC above 80.25\%, validating that balanced integration of task loss and distillation knowledge yields optimal results.

\textbf{Historical Augmentation Strategy} (Figure~\ref{fig:enhancement}).
Enhancement ratio $r_\text{enh}$ shows a monotonic improvement trend with increasing augmentation. Performance improves from 80.25\% AUC without augmentation to 80.29\% at $r_\text{enh}=0.5$, with corresponding LogLoss reduction from 0.4517 to 0.4511. The optimal configuration ($r_\text{enh}=0.5$) requires 122.0s training time, representing a 12.7\% increase over the baseline (108.3s). Notably, $r_\text{enh} \in [0.2, 0.5]$ achieves consistently strong performance (80.28-80.29\% AUC), suggesting that a moderate level of historical augmentation can enrich the training distribution and thereby enhance distillation effectiveness.

\textbf{Distribution Shift Detection} (Figure~\ref{fig:divergence}).
Although different divergence metrics show substantial scale differences—JS Divergence ranges from 0.0046 (Criteo1T) to 0.0753 (Avazu), while KL Divergence spans from 0.0286 to 1.4099—their ranking of datasets remains consistent. Both metrics identify Avazu as having the largest distribution shift, whereas Criteo and Criteo1T show smaller deviations. This scale-invariant consistency indicates that the adaptive mechanism can effectively operate under multiple criteria, as long as appropriate thresholds are calibrated, ensuring reliable activation of enhancement strategies regardless of the chosen metric.

\begin{table}[tbp]\footnotesize
\centering
\caption{Notation Summary}
\vspace{-1em}
\label{tab:notation}
% \resizebox{0.88\linewidth}{!}{
\begin{tabular}{cl}
\toprule
\textbf{Notation} & \textbf{Description} \\
\midrule
\multicolumn{2}{l}{\textit{Data and Features}} \\
$\mathcal{D}^{\text{hist}}$ & Historical data (inaccessible) \\
$\mathcal{D}^{\text{train}}$ & Available offline training data \\
$\mathcal{D}^{\text{online}}$ & Online streaming data \\
$\mathcal{D}^{\text{test}}$ & Test data for evaluation \\
$\mathcal{D}_{\text{pos}}, \mathcal{D}_{\text{neg}}$ & Positive and negative samples \\
$\mathbf{x}$ & Feature vector \\
$\mathbf{x}^{\text{cat}}, \mathbf{x}^{\text{num}}$ & Categorical and numerical features \\
$y \in \{0,1\}$ & Binary label (click/no-click) \\
$N$ & Number of samples \\
$V$ & Vocabulary size/cardinality \\
$B$ & Batch size \\
$\mathcal{B}_t$ & Batch at time step $t$ \\
\midrule
\multicolumn{2}{l}{\textit{Models and Parameters}} \\
$f_T, f_S$ & Teacher and student models \\
$\theta, \theta_T, \theta_S$ & Model parameters (general, teacher, student) \\
$\theta^{\text{net}}$ & Interaction network parameters \\
$E \in \mathbb{R}^{V \times d}$ & Embedding matrix \\
$E_T, E_S$ & Teacher and student embedding matrices \\
$d, d_T, d_S$ & Embedding dimensions \\
$p_T, p_S$ & Teacher and student predictions \\
\midrule
\multicolumn{2}{l}{\textit{Loss Functions and Costs}} \\
$\mathcal{L}$ & General loss function \\
$\mathcal{L}_{\text{BCE}}$ & Binary cross-entropy loss \\
$\mathcal{L}_{\text{KD}}$ & Knowledge distillation loss \\
$\mathcal{C}_{\text{switch}}$ & Total model switching cost \\
$\mathcal{C}_{\text{comp}}$ & Computational overhead \\
$\mathcal{C}_{\text{perf}}$ & Performance degradation cost \\
\midrule
\multicolumn{2}{l}{\textit{Projection and Transfer}} \\
$W$ & Projection matrix \\
$Q, U$ & Orthogonal matrices from QR/eigen decomposition \\
$C$ & Covariance matrix \\
$\Lambda$ & Eigenvalue diagonal matrix \\
$\mu$ & Mean vector of embeddings \\
$\bar{E}_T$ & Centered embedding matrix \\
$G_T, G_S$ & Gram matrices (teacher and student) \\
\midrule
\multicolumn{2}{l}{\textit{Training Hyperparameters}} \\
$\eta_S, \eta_T$ & Student and teacher learning rates \\
$\lambda$ & Knowledge distillation weight \\
$\tau$ & Teacher update interval \\
$t$ & Time step \\
\midrule
\multicolumn{2}{l}{\textit{Sampling Strategy}} \\
$\Psi(\mathcal{D}, n)$ & Function to sample $n$ samples from dataset $\mathcal{D}$ \\
$r$ & Overall sampling ratio \\
$r_{\text{pos}}$ & Target positive sampling ratio \\
$r_{\text{unclick}}$ &Unclicked-to-clicked ratio\\
$K$ & Number of temporal blocks \\
\midrule
\multicolumn{2}{l}{\textit{Distribution Shift Detection}} \\
$W_i$ & Time window $i$ \\
$n$ & Number of time windows \\
$P_j^{W_i}$ & Probability distribution of feature $j$ in window $W_i$ \\
$\mathbb{D}(\cdot, \cdot)$ & Distance metric between distributions \\
$\Delta_j^{(i)}$ & Distribution shift for feature $j$ between windows $i$ and $i+1$ \\
$\Delta_\text{shift}$ & Overall distribution shift metric \\
$b$ & Number of histogram bins \\
\midrule
\multicolumn{2}{l}{\textit{Adaptive Enhancement}} \\
$r_\text{enh}$ & Historical data enhancement ratio \\
$\theta_\text{low}, \theta_\text{high}$ & Lower and upper shift thresholds \\
$k$ & Maximum enhancement ratio \\
$\mathcal{B}_\text{aug}$ & Augmented batch \\
\bottomrule
\end{tabular}
% }
\end{table}

\begin{algorithm}[tbp]\footnotesize
\caption{CrossAdapt: Offline Transfer and Online Adaptive Co-Distillation}
\label{alg:crossadapt}
\begin{algorithmic}[1]
\Require Teacher model $f_T(\theta_T)$ with embeddings $E_T$, untrained student model $f_S$, offline data $\mathcal{D}^{\text{train}}$, online stream $\mathcal{D}^{\text{online}}$, sampling ratio $r$, positive ratio $r_{\text{pos}}$, temporal blocks $K$, learning rates $\eta_S, \eta_T$, teacher update interval $\tau$, distillation weight $\lambda$, shift thresholds $\theta_{\text{low}}, \theta_{\text{high}}$
\Ensure Trained student model $f_S(\theta_S)$

\State \textbf{Stage 1: Offline Cross-Architecture Transfer}

\State // Step 1.1: Dimension-Adaptive Embedding Projection
\If{$d_S = d_T$}\Comment{Direct copy}
    \State $E_S \leftarrow E_T$ 
\ElsIf{$d_S > d_T$}\Comment{Expand to higher dimension}
    \State $W \leftarrow \text{OrthogonalProjection}(d_T, d_S)$, $E_S \leftarrow E_T \cdot W$ 
\Else \Comment{Reduce to lower dimension}
    \State $W \leftarrow \text{PCAProjection}(E_T, d_S)$, $E_S \leftarrow E_T \cdot W$ 
\EndIf

\State // Step 1.2: Strategic Distillation Sample Selection
\State Partition $\mathcal{D}^{\text{train}}$ into positive $\mathcal{D}_{\text{pos}}$ and negative $\mathcal{D}_{\text{neg}}$ samples;
Divide both into $K$ temporal blocks by time;
$\mathcal{D}'_{\text{train}} \leftarrow \emptyset$
\For{$k = 1$ to $K$}
    \State Compute sample sizes: $n_{\text{pos}} = \frac{r \cdot r_{\text{pos}} |\mathcal{D}_{\text{train}}|}{K}$, $n_{\text{neg}} = \frac{r(1-r_{\text{pos}})|\mathcal{D}_{\text{train}}|}{K}$
    \State $\mathcal{D}'_{\text{train}} \leftarrow \mathcal{D}'_{\text{train}} \cup \Psi(\mathcal{D}_{\text{pos}}^{(k)}, n_{\text{pos}}) \cup \Psi(\mathcal{D}_{\text{neg}}^{(k)}, n_{\text{neg}})$
\EndFor

\State // Step 1.3: Progressive Interaction Network Distillation
\State // \textbf{Phase 1:} Freeze embeddings $E_S$, train interaction network $\theta_S^{\text{net}}$
\For{each batch $\mathcal{B} \in \mathcal{D}'_{\text{train}}$}
    \State Get predictions: $p_T \leftarrow f_T(\mathcal{B})$, $p_S \leftarrow f_S(\mathcal{B})$
    \State $\mathcal{L}_{\text{net}} \leftarrow \mathcal{L}_{\text{BCE}}(p_S, y) + \lambda \mathcal{L}_{\text{KD}}(p_S, p_T)$, 
    $\theta_S^{\text{net}} \leftarrow \theta_S^{\text{net}} - \eta \nabla_{\theta_S^{\text{net}}} \mathcal{L}_{\text{net}}$
\EndFor
\State // \textbf{Phase 2:} Unfreeze $E_S$, jointly optimize all parameters $\theta_S$
\For{each batch $\mathcal{B} \in \mathcal{D}'_{\text{train}}$}
    \State Get predictions: $p_T \leftarrow f_T(\mathcal{B})$, $p_S \leftarrow f_S(\mathcal{B})$
    \State $\mathcal{L}_{\text{joint}} \leftarrow \mathcal{L}_{\text{BCE}}(p_S, y) + \lambda \mathcal{L}_{\text{KD}}(p_S, p_T)$, $\theta_S \leftarrow \theta_S - \eta \nabla_{\theta_S} \mathcal{L}_{\text{joint}}$
\EndFor

\State \textbf{Stage 2: Online Adaptive Co-Distillation}

\State // Step 2.1: Distribution Shift Detection
\State Partition $\mathcal{D}^{\text{train}}$ into $n$ consecutive time windows $\{W_1, \ldots, W_n\}$
\For{$i = 1$ to $n-1$}
    \For{each feature $j = 1$ to $V$}
        \State Compute distance $\Delta_j^{(i)} \leftarrow \mathbb{D}(P_j^{W_i}, P_j^{W_{i+1}})$
\EndFor
\EndFor
\State $\Delta_{\text{shift}} \leftarrow \frac{1}{n-1} \sum_{i=1}^{n-1} \frac{1}{V} \sum_{j=1}^{V} \Delta_j^{(i)}$, compute $r_{\text{aug}}$
\State // Step 2.2: Asymmetric Teacher-Student Co-Evolution
\State Initialize time step $t \leftarrow 0$, teacher gradient $g_T \leftarrow 0$
\For{each streaming batch $\mathcal{B}_t \in \mathcal{D}^{\text{online}}$}
    \State $t \leftarrow t + 1$
    \State $\mathcal{B}_{\text{aug}} \leftarrow \mathcal{B}_t \cup \Psi(\mathcal{D}^{\text{train}}, r_{\text{aug}} \cdot |\mathcal{B}_t|)$ \Comment{Augment with history}
    \State Get predictions: $p_T \leftarrow f_T(\mathcal{B}_{\text{aug}})$, $p_S \leftarrow f_S(\mathcal{B}_{\text{aug}})$
    \State $\mathcal{L}_S \leftarrow \mathcal{L}_{\text{BCE}}(p_S, y) + \lambda \mathcal{L}_{\text{KD}}(p_S, p_T)$
    \State $\theta_S^{t+1} \leftarrow \theta_S^t - \eta_S \nabla_{\theta_S} \mathcal{L}_S$ \Comment{Student update}
    \State $\mathcal{L}_T \leftarrow \mathcal{L}_{\text{BCE}}(p_T, y)$, $g_T \leftarrow g_T + \nabla_{\theta_T} \mathcal{L}_T$ 
    \If{$t \bmod \tau = 0$}\Comment{Periodic teacher update}
        \State $\theta_T^{t+1} \leftarrow \theta_T^t - \eta_T \cdot g_T$, $g_T \leftarrow 0$
    \Else
        \State $\theta_T^{t+1} \leftarrow \theta_T^t$
    \EndIf
\EndFor

\Return Trained student model $\theta_S$
\end{algorithmic}
\end{algorithm}

\section{Online Deployment Details (Sec.~\ref{sec:exp_online_setup})}
\label{app:online_deploy_detail}

\paragraph{Evaluation Metrics.}

We employ the following metrics to evaluate pCVR prediction quality:

\textbf{pCVR Bias} quantifies systematic deviation between predicted and actual conversion rates: $\text{Bias} = \frac{1}{|\mathcal{I}|} \sum_{i \in \mathcal{I}} | \frac{\hat{p}_i - p_i}{p_i} |$, where $\hat{p}_i$ and $p_i$ denote predicted and observed conversion probabilities for item $i$. Lower bias indicates better probability calibration, critical for bid optimization.

\textbf{Spearman Correlation} ($\rho$) measures rank-order agreement between predictions and ground truth: $\rho = 1 - \frac{6\sum d_i^2}{n(n^2-1)}$, where $d_i$ is the rank difference for item $i$. Values range from -1 to 1, with higher values indicating superior ranking quality.

\textbf{NDCG@K} assesses top-$K$ ranking quality with position-based discounting: $\text{NDCG@K} = \frac{\sum_{i=1}^{K} \frac{2^{\text{rel}_i} - 1}{\log_2(i+1)}}{\text{IDCG@K}}$, where $\text{rel}_i$ is the relevance at position $i$ and IDCG@K normalizes to [0,1]. We report NDCG@5 and NDCG@10 to emphasize top-position accuracy, crucial for advertising systems where high-ranked items dominate exposure.

For USAug evaluation (Table~\ref{tab:usaug_results}), we measure student-teacher prediction consistency on exposed items, where the production teacher serves as ground truth validated through extensive online testing.

\paragraph{Experimental Environment.}
Experiments are conducted on the Tencent AngelRec platform, equipped with 8 NVIDIA H20 GPUs (96 GB memory each), an AMD 96-core processor, and a parameter server with 5TB of memory.

\paragraph{Remark.}
The adaptive historical knowledge preservation module, which dynamically mixes historical data from different time periods during online training, currently faces infrastructure limitations in our production system and cannot be deployed at this stage. We validate its effectiveness on public datasets (Sec.~\ref{sec:ablation}) and plan to enable this capability as platform infrastructure evolves.

\section{Notations and Algorithms}
Table~\ref{tab:notation} summarizes all the notations used throughout this paper.
Algorithm~\ref{alg:crossadapt} presents the full procedure of the proposed CrossAdapt algorithm.

\end{document}